\renewcommand{\cite}{\citep}
\def\shownotes{1}
\newcommand{\authnote}[2]{[#1: #2]}
\newcommand{\authnote}[2]{}
\newcommand{\spoc}{\textsc{SPoC}}
\newcommand{\sanstype}{\textsc{SansType}}
\newcommand{\testp}{\textsc{TestP}}
\newcommand{\testw}{\textsc{TestW}}
\newcommand{\hatalpha}{\hat{\alpha}}
\newcommand{\fstd}{f_{\text{std}}}
\newcommand{\fstdi}{f_{\text{std},j}}
\newcommand{\ftheta}{f_{\theta}}
\newcommand{\fthetai}{f_{\theta_j}}
\newcommand{\fthetabasemin}{f_{\text{base}}}
\newcommand{\fthetabase}{f_{\text{base}}}
\newcommand{\fhat}{\hat{f}}
\newcommand{\fstar}{f^\star}
\DeclareMathOperator*{\argmin}{arg\,min}
\DeclareMathOperator*{\argmax}{arg\,max}
\newlength{\widebarargwidth}
\newlength{\widebarargheight}
\newlength{\widebarargdepth}
\newenvironment{btHighlight}[1][]
{\begingroup\tikzset{bt@Highlight@par/.style={#1}}\begin{lrbox}{\@tempboxa}}
{\end{lrbox}\bt@HL@box[bt@Highlight@par]{\@tempboxa}\endgroup}
\newcommand\btHL[1][]{%
  \begin{btHighlight}[#1]\bgroup\aftergroup\bt@HL@endenv%
}
\def\bt@HL@endenv{%
  \end{btHighlight}%
  \egroup
}
\newcommand{\bt@HL@box}[2][]{%
  \tikz[#1]{%
    \pgfpathrectangle{\pgfpoint{1pt}{0pt}}{\pgfpoint{\wd #2}{\ht #2}}%
    \pgfusepath{use as bounding box}%
    \node[anchor=base west, fill=orange!30,outer sep=0pt,inner xsep=1pt, inner ysep=0pt, rounded corners=3pt, minimum height=\ht\strutbox+1pt,#1]{\raisebox{1pt}{\strut}\strut\usebox{#2}};
  }%
}
\let\orig@lstnumber=\thelstnumber
\newcommand\lstresetnumber{\global\let\thelstnumber=\orig@lstnumber}
\definecolor{listinggray}{gray}{0.9}
\definecolor{lbcolor}{rgb}{0.9,0.9,0.9}
\ttfamily\color[rgb]{0,0,1}\scriptsize,
\ttfamily\color[rgb]{0.627,0.126,0.941}\scriptsize,
\newcommand\sH{\ensuremath{\mathcal{H}}}
\newcommand\sR{\ensuremath{\mathcal{R}}}
\newcommand\sV{\ensuremath{\mathcal{V}}}
\newcommand\sX{\ensuremath{\mathcal{X}}}
\newcommand\sY{\ensuremath{\mathcal{Y}}}
\newcommand\R{\ensuremath{\mathbb{R}}} 
\newcommand{\E}{\ensuremath{\mathbb{E}}} 
\begin{document}
\bibliographystyle{plainnat}

\title{Composed Fine-Tuning: Freezing Pre-Trained Denoising Autoencoders for Improved Generalization}

\author{%
        \large{Sang Michael Xie} \\
        \large{Stanford University}\\
        {\texttt{xie@cs.stanford.edu}} \\
        \And
        \large{Percy Liang} \\
        \large{Stanford University} \\
        {\texttt{pliang@cs.stanford.edu}} \\
        \And
        \large{Tengyu Ma} \\
        \large{Stanford University} \\
        {\texttt{tengyuma@cs.stanford.edu}} \\
}

\date{}

\newcommand{\fix}{\marginpar{FIX}}
\newcommand{\new}{\marginpar{NEW}}

\maketitle

\begin{abstract}
    We focus on prediction problems with structured outputs that are subject to output validity constraints, e.g. pseudocode-to-code translation where the code must compile. While labeled input-output pairs are expensive to obtain, ``unlabeled'' outputs, i.e. outputs without corresponding inputs, are freely available (e.g. code on GitHub) and provide information about output validity.  We can capture the output structure by pre-training a denoiser to denoise corrupted versions of unlabeled outputs. We first show that standard fine-tuning after pre-training destroys some of this structure. We then propose \emph{composed fine-tuning}, which fine-tunes a predictor composed with the pre-trained denoiser, which is frozen to preserve output structure.  For two-layer ReLU networks, we prove that composed fine-tuning significantly reduces the complexity of the predictor, thus improving generalization.  Empirically, we show that composed fine-tuning improves over standard fine-tuning on two pseudocode-to-code translation datasets (3\% and 6\% relative). The improvement from composed fine-tuning is magnified on out-of-distribution (OOD) examples (4\% and 25\% relative).
\end{abstract}

\section{Introduction}

We study prediction problems whose outputs have validity constraints. For example, in pseudocode-to-code translation~\cite{kulal2019spoc}, the output code must compile. Other examples include machine translation~\cite{sutskever2014sequence,cho2014statmt,bahdanau2015neural} and molecule generation~\cite{lucio2020molecule,senior2020protein}, where outputs should be grammatical or chemically valid, respectively.
Expensively-obtained labeled data may not contain enough examples to learn a model that captures the complex output structure governing validity.
In these problems, there are often lots of ``unlabeled'' outputs---outputs without a corresponding input (e.g., GitHub has over 40 million public code repositories as of February 2021~\cite{github2021repos}).

\begin{figure}[tbp]
  \centering
  \subfloat{\includegraphics[width=0.80\textwidth]{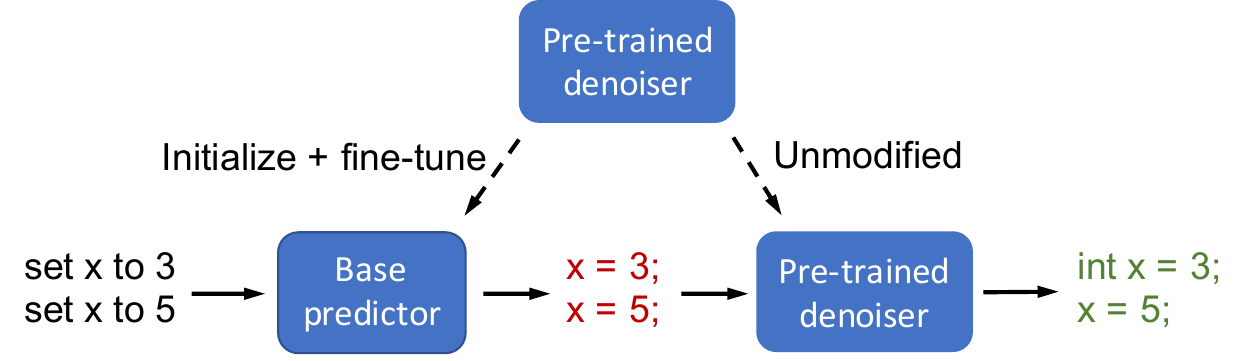}}
  \hfill
  \subfloat{\includegraphics[width=0.35\textwidth]{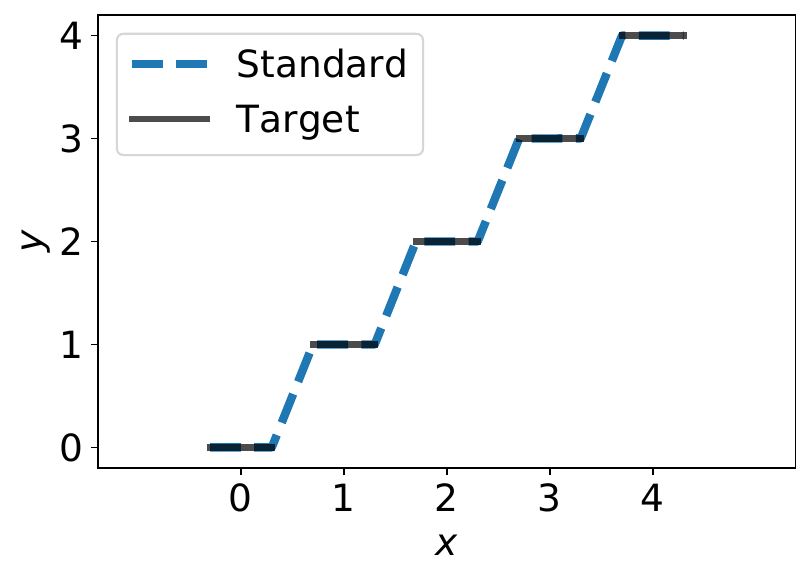}}
  \subfloat{\includegraphics[width=0.35\textwidth]{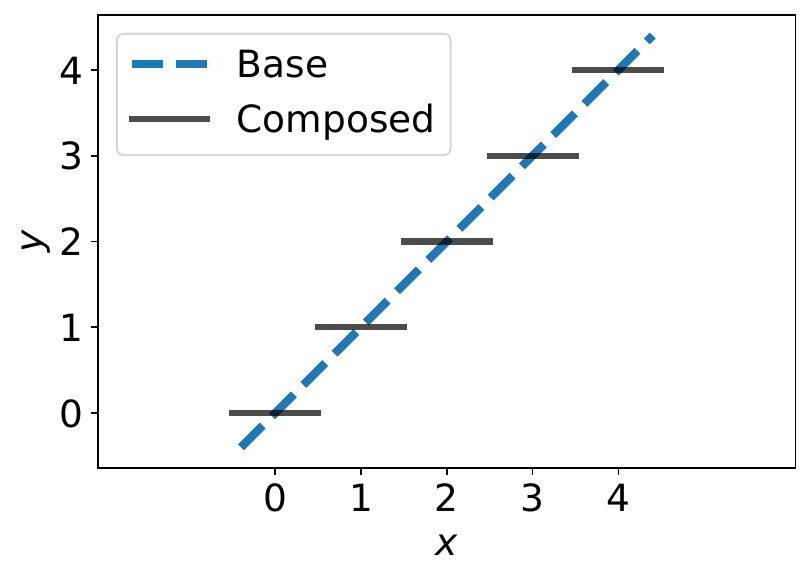}}
  \caption{\textbf{(Top)} Composed fine-tuning: we fine-tune a base predictor composed with a pre-trained denoising autoencoder (denoiser) that is unmodified during fine-tuning. The base predictor is initialized from the same pre-trained denoiser. Composing with the fixed denoiser preserves the output structure information in the pre-trained denoiser. This offloads the complexity of capturing output structure to the denoiser, reducing the complexity of the base predictor. \textbf{(Bottom-Left)} Univariate regression example where the target function is a staircase defined on disjoint intervals. Standard fine-tuning fits the target directly, which requires learning a complex function. \textbf{(Bottom-Right)} A simple linear function can be used to fit a staircase function when composed with the denoiser that projects onto the valid outputs (the integers).
  }\label{fig:simple}
  \end{figure}

Pre-training with denoising autoencoders (e.g., BART~\citep{lewis2020bart} and T5~\citep{raffel2019exploring}) leverages unlabeled outputs to learn the output structure and transfer it to downstream tasks.
The idea is to synthetically corrupt unlabeled outputs and learn to denoise them, yielding a pre-trained denoiser that generates valid outputs.
The pre-trained denoiser can be fine-tuned on a smaller labeled dataset from a downstream task such as machine translation.
Fine-tuning is the de facto paradigm in NLP~\citep{devlin2019bert,lewis2020bart,raffel2019exploring} and vision~\citep{dosovitskiy2021vit,radford2021clip,kornblith2019better,yosinski2014transferable}, and is popular because the computationally expensive step of leveraging large-scale data is concentrated in pre-training, while fine-tuning on downstream tasks is relatively cheap.

In this paper, we first show that standard fine-tuning may not optimally leverage the pre-trained model in denoising autoencoder language models (e.g., BART, T5), complementing recent works on the suboptimality of fine-tuning in masked language models~\cite{dodge2020finetuning,lee2020mixout,zhang2021revisiting}.
We run a simple experiment where we apply the pre-trained denoiser post-hoc to the outputs of the standard fine-tuned model, improving the accuracy of the model by 0.5--1.5\% on a pseudocode-to-code dataset (Section~\ref{sec:code}).
Thus, it seems that standard fine-tuning intuitively ``destroys'' some of the information in the pre-trained weights.

To better utilize the pre-trained denoiser, we propose \emph{composed fine-tuning} (Figure~\ref{fig:simple} top), which fine-tunes a base predictor composed with a fixed pre-trained denoiser.
Here, the denoiser aims to enforce the output validity structure learned during pre-training.
The main difference with our simple experiment is that we fine-tune with the composed denoiser instead of only applying the denoiser at test-time.

Factorizing the prediction problem into two modules, the base predictor and the denoiser, allows the base predictor to be simpler by offloading the complexity of modeling output structure to the denoiser.
For intuition, Figure 1 (bottom-left/right) shows a pictorial example of a staircase function where valid outputs are integers and requires a complex spline to represent directly with standard fine-tuning.
In contrast, composing a simple linear base predictor with a denoiser (which rounds to the nearest integer) can not only represent the staircase function but also extrapolate perfectly outside the range of training data.
Thus, composed fine-tuning reduces the complexity, and thus sample efficiency, of the learned base predictor.

We formalize this intuition theoretically showing that composing with a denoiser reduces the complexity of the base predictor needed to represent the target function. We show this complexity gap can be arbitrarily large for 2-layer ReLU networks representing a family of functions from $\R\rightarrow \R^k$, depending on the complexity of the target function to be represented.

Empirically, on two pseudocode-to-code datasets, we show that composed fine-tuning improves validity and correctness over standard fine-tuning.
We consider the more difficult full-program translation task rather than line-by-line translation (with compiler side information) studied by previous work~\cite{kulal2019spoc,yasunaga2020repair}.
First we introduce \sanstype, a synthetic pseudocode-to-code dataset where the pseudocode specifies all but the variable types.
On \sanstype, composed fine-tuning corrects invalid code at test time and also simplifies the base predictor by helping with global type inference.
Secondly on \spoc~\cite{kulal2019spoc}, a recent pseudocode-to-code dataset based on programming competition problems, we improve the proportion of correct programs by about 1\% over standard fine-tuning and 3--4\% over a baseline Transformer.
We also show that composed fine-tuning provides gains on top of other semi-supervised methods, including backtranslation, which uses a large unlabeled dataset during fine-tuning. This is a more computationally expensive way to use unlabeled data than pre-training, which processes unlabeled data separately from downstream labeled data. Combining composed fine-tuning with backtranslation obtains further gains over backtranslation by about 1.5\% on \spoc.

Composed fine-tuning provides larger gains on out-of-distribution (OOD) examples, including on unseen pseudocode in \sanstype~and unseen programs in \spoc.
On \sanstype, the relative improvement over standard fine-tuning grows from 5\% to 24\% on the OOD split. On \spoc, the relative improvement increases from 3\% to 4\% on unseen programs.
We also test on an image generation task where all test inputs are OOD. In particular, the inputs are font type and character type pairs, and the output is a valid font character image. At test time, the composed predictor produces font images of unseen font-character pairings with noticeably better clarity and styling than a baseline that directly generates the image.

\section{Setup}
\label{sec:setup}
We consider prediction problems from an input space $\sX$ (e.g., pseudocode) to
an output space $\sY$ (e.g., code) where there is an unknown subset of
\emph{valid} outputs $\sV \subseteq \sY$ (e.g., code that compiles) and the
true output is always valid (in $\sV$).  A pre-trained \emph{denoiser}
$\Pi:\sY \rightarrow \sV$ ``projects'' a possibly invalid output in $\sY$ to
the valid set $\sV$.  In our theory, the denoiser is a Euclidean projection
onto the valid set $\sV$, and in practice, the denoiser is approximate and
learned on synthetic corrputions of valid unlabeled outputs.  Our goal is to is
to learn a \emph{predictor} $f: \sX \rightarrow \sY$ using a small downstream
labeled dataset $(x_1, y_1), \dots, (x_n, y_n)$ where $x_i\in\sX$ and $y_i\in \sV$.

\paragraph{Base and composed predictors.}
Let $\Pi \circ \fthetabase$ be a \emph{composed predictor} that approximates the target function $\fstar$.
In the context of $\Pi \circ \fthetabase$, we call $\fthetabase$ the \emph{base predictor}.

\paragraph{Standard fine-tuning.}
Standard fine-tuning parameterizes a probabilistic model $p_\alpha$ for the predictor and optimizes the parameters $\alpha$ by maximizing the log-likelihood on labeled data $\E_{x,y}[ \log p_\alpha( y \mid x)]$.
The parameters $\alpha$ are initialized with the parameters of the pre-trained denoiser.
We define the standard fine-tuned predictor to be $\fstd(x) = \argmax_y p_\alpha(y \mid x)$.
We also compare to other fine-tuning methods that use extra unlabeled data or initially freeze some layers in Section~\ref{sec:ablations}.

\newsavebox{\boxonea}
\newsavebox{\boxtwoa}
\newsavebox{\boxthreea}
\begin{figure*}
\hspace{25pt}
\begin{minipage}{.45\textwidth}
\begin{lrbox}{\boxonea}%
\begin{lstlisting}[identifierstyle=, keywordstyle=,stringstyle=]
set var_8 to "str_2";
instantiate var_2;
read var_2 from stdin;
add "str_4" to the beginning of var_8;
set var_2 to false;
set var_5 to true;
if var_2 is true, swap the values of var_2 and var_5;
if var_2 is true, set var_2 to the value of var_5 and var_5 to the value of var_2;
output var_8 to stdout;
print var_2;
output var_5 to stdout;
\end{lstlisting}
\end{lrbox}
\scalebox{0.75}{\usebox{\boxonea}}
\end{minipage}
\hspace{-48pt}
\begin{minipage}{.35\textwidth}
\hspace{8pt}
\begin{lrbox}{\boxtwoa}%
\begin{lstlisting}
int main () {
  string var_8 = "str_2";
  bool var_2;
  cin >> var_2;
  var_8 = "str_4" + var_8;
  var_2 = false;
  `var_5 = true;`
  if ( var_2 ) {
    `string temp = var_2;`
    var_2 = var_5;
    var_5 = temp; }
  if ( var_2 ) {
    bool temp = var_2;
    var_2 = var_5;
    var_5 = temp; }
  cout << var_8;
  cout << var_2;
  cout << var_5;
  return 0; }
\end{lstlisting}
\end{lrbox}
\scalebox{0.75}{\usebox{\boxtwoa}}
\end{minipage}
\hspace{-55pt}
\begin{minipage}{.34\textwidth}
\hspace{25pt}
\begin{lrbox}{\boxthreea}%
\begin{lstlisting}[]
int main () {
  string var_8 = "str_2";
  bool var_2;
  cin >> var_2;
  var_8 = "str_4" + var_8;
  var_2 = false;
  `bool var_5 = true;`
  if ( var_2 ) {
    `bool temp = var_2;`
    var_2 = var_5;
    var_5 = temp; }
  if ( var_2 ) {
    bool temp = var_2;
    var_2 = var_5;
    var_5 = temp; }
  cout << var_8;
  cout << var_2;
  cout << var_5;
  return 0; }
\end{lstlisting}
\end{lrbox}
\scalebox{0.75}{\usebox{\boxthreea}}
\end{minipage}
\caption{
    \textbf{(Left)} Example input pseudocode in \sanstype, which does not supply type information.
    \textbf{(Middle)} Output of the base predictor given the input pseudocode, which contains some errors.
    \textbf{(Right)} Output of the denoiser, which instantiates \texttt{var\_5} and corrects the type of \texttt{temp}. Enforcing type consistency requires resolving complex dependencies across the entire program.
}
\label{fig:code-example-denoise}
\end{figure*}

\section{Standard fine-tuning destroys some pre-trained output structure}
\label{sec:destroy}

In this section, we show that standard fine-tuning is suboptimal.
To do so, we consider pseudocode-to-code translation, where the input space $\sX$ is pseudocode
and the set of valid outputs $\sV$ is code that compiles and executes.

We consider \sanstype, a synthetic pseudocode-to-code dataset we generated from pseudocode and code
templates (see Figure~\ref{fig:code-example-denoise} for an example).
In \sanstype, the pseudocode does not supply types (e.g., \texttt{set x to 5}) and thus the model must look at how \texttt{x} is used in the rest of the generated program to generate correct code (e.g., \texttt{int x = 5} if \texttt{x} has not been previously initialized).
For this experiment, we pre-train a denoiser with synthetic corruptions that delete and substitute tokens from unlabeled code examples (see Appendix~\ref{app:synthetic}).

\begin{table}[t]
\centering
\scalebox{0.9}{
\begin{tabular} {l r r r r r}
\toprule
& \multirow{2}{*}{\# Layers} & \multirow{2}{2cm}{Test-time denoiser?} & \multirow{2}{*}{Correct}\\\\
\midrule
Std fine-tuning & 6 & N & 79.8\\
Std fine-tuning & 6 & Y & 80.4\\
Std fine-tuning & 12 & N & 79.8\\
Std fine-tuning & 12 & Y & 81.2\\
\bottomrule
\end{tabular}
}
\caption{
    Proportion of correct code (\%) which compiles, executes, and passes test cases in the \sanstype~task. Applying the pre-trained denoiser to the outputs of the fine-tuned model improves the performance, suggesting that standard fine-tuning does not fully leverage the pre-trained parameters.
}
\label{table:destroy}
\end{table}

Table~\ref{table:destroy} shows the results of standard fine-tuning on \sanstype~for 6 and 12 layer models.
We find that directly applying the pre-trained denoiser $\Pi$ to the
outputs of the fine-tuned model $\fstd$ at test-time (i.e., a test-time
denoiser) improves correctness of the output code by
0.5--1.5\%, and this gap increases with the number of layers.
Note that increasing the number of layers does not improve the performance of the standard fine-tuned model $\fstd$, but improves the performance of $\Pi \circ \fstd$.
Thus the denoiser $\Pi$ has improved with more layers, but standard fine-tuning fails to leverage the improved denoiser.

\section{Composed fine-tuning}
\label{sec:framework}

Based on the observations in the previous section, we introduce composed fine-tuning.
We assume the pre-trained denoiser $\Pi$ with parameters $\beta$ is given as a probabilistic
model $p_\beta$ and pre-trained on unlabeled output data.
The goal is to learn the parameters $\theta$ of a base predictor $\fthetabase$ defined by a probabilistic model $p_\theta$.
After training, we use the composed predictor $\Pi \circ \fthetabase$ to make predictions.

To learn the parameters $\theta$, we maximize two log-likelihood terms for the composed and base models respectively:
\begin{align}
    \label{eqn:composed-objective}
    \E_{x,y}[ \E_{y’ \sim p_\theta(\cdot \mid x)}[\log p_\beta (y \mid y’)]] + \lambda \E_{x,y}[ \log p_\theta( y \mid x) ].
\end{align}
For the first term, we would ideally maximize the log-likelihood of a denoised output $y$, which is generated from a Markov chain $x\rightarrow y' \rightarrow y$ conditioned on $x$. In this chain, the noisy output $y'$ is sampled from the probabilistic model for the base predictor $p_{\theta}(y' \mid x)$, and $y$ is sampled from the probabilistic model for the denoiser $p_{\beta}(y \mid y')$. Computing the log-likelihood exactly involves an intractable integral over noisy outputs $y'$, and thus the first term in Equation~\ref{eqn:composed-objective} is a lower bound instead (see Appendix~\ref{app:objective}).
For discrete outputs, the first term in the objective involves
an expectation over a discrete space, which requires REINFORCE~\cite{williams1992simple}, straight-through
estimation~\cite{bengio2013estimating}, or a Gumbel-softmax
reparameterization~\cite{jang2017categorical,maddison2016concrete} to optimize.
For intuition, we can analogize this to the RL setting by viewing the base
predictor as a policy, the output space as the action space, and the
pre-trained denoiser as the reward function.
In our implementations, we use REINFORCE.

The second term is the log-likelihood of only $p_\theta$, which is the same as in standard fine-tuning.
Since the pre-trained $\Pi$ is imperfect, the hyperparameter $\lambda$
in the objective trades off between fitting the composition $\Pi \circ
\fthetabase$ and fitting $\fthetabase$ directly to the data.

\paragraph{Reducing complexity of the base predictor.}
By relying on the denoiser to correct errors, the base predictor can learn a simpler function.
For example, the base predictor could translate the pseudocode in
\sanstype~without enforcing type consistency, leading to a simpler base predictor.
Figure~\ref{fig:code-example-denoise} (left) gives an example
pseudocode input in \sanstype. With composed fine-tuning, the base
predictor could learn to make code translations on a mostly local,
line-by-line basis (a simpler solution) while relying on the denoiser
to correct types, which requires resolving complex global dependencies.

\paragraph{Adaptability to different denoisers.}
How does the choice of denoiser affect composed fine-tuning?
Naively, denoisers that are pre-trained with corruptions that are similar to errors that the base predictor would naturally make would make them more effective denoisers.
However, we note that training the base predictor composed with the denoiser allows
the base predictor to adapt to the errors the denoiser can correct. In
Section~\ref{sec:pretraining-obj} and Appendix~\ref{app:fonts}, we show that composed fine-tuning gives gains across a variety of noising distributions in both text and image domains.
We also show in Section~\ref{sec:pretraining-obj} that a pre-trained denoiser that is worse at correcting errors in the outputs of a baseline model can actually improve fine-tuning results downstream.

\section{Denoisers can reduce model complexity}
\label{sec:theory}

In this section, we formalize the intuition that composed fine-tuning reduces
the complexity of the base predictor.
We study standard versus composed fine-tuning from an
approximation theory standpoint and use complexity measures on predictors as
surrogates for sample complexity.
We have a true target function $\fstar: \sX \rightarrow \sV$ and a hypothesis class $\sH$ that includes $\fstar$.
For composed fine-tuning, we assume access to a denoiser $\Pi:\sY \rightarrow \sV$ which is a Euclidean
projection onto the valid output space $\sV$ (breaking ties arbitrarily).
Let $\|\cdot\|$ be a norm on the hypothesis class $\sH$.
We compare the minimum norm base predictor $\fthetabase\in \argmin_{f\in\sH}\{\|f\|:\Pi \circ f(x)=\fstar(x),x\in\sX\}$ trained
via composition with a given denoiser $\Pi$, against the standard
predictor $\fstd\in \argmin_{f\in\sH}\{\|f\|:f(x)=\fstar(x),x\in\sX\}$,
a minimum norm predictor which represents $\fstar$ directly, as in
standard fine-tuning. Since we consider the minimizers, we do not
capture the effects of optimization.

In this section, we consider the simplest form of output validity structure, which is a discrete set of valid outputs $\sV$.
Intuitively, the base predictor can capture the smooth global trends while the denoiser takes care of the projection onto the discrete valid set.
This allows the base predictor in composed fine-tuning to be much simpler than the standard fine-tuned model, which tries to model the discrete outputs directly.

In Section~\ref{sec:simple}, we give a simple example for when
composing with a denoiser ($\Pi \circ \fthetabase$) can drastically
reduce the complexity of the base predictor.
Since $\fthetabase$ becomes easier to approximate, we may expect better
generalization~\cite{bartlett2017spectral,neyshabur2017generalization,wei2020improved,wei2019data}.
In Section~\ref{sec:relu}, we theoretically show for two-layer ReLU
networks that the complexity required to directly represent $\fstar$
can be arbitrarily larger than representing with a composed predictor
depending on the complexity of $\fstar$, for a family of target functions defined on a discrete valid set $\sV$.

\subsection{Motivating example}\label{sec:simple}
Figure~\ref{fig:simple} shows a staircase function $\fstar$ that can be represented by a complex standard predictor $\fstd$ but the minimum norm base
predictor $\fthetabase$ can represent $\fstar$ with low complexity when composed with a denoiser.
For $0<\delta<1$, let the input space $\sX=\uplus_{i=1}^N
[i-(1-\delta)/2, i+(1-\delta)/2]$ be a union of $N$ disjoint intervals
and the valid outputs $\sV=\mathbb{Z}$ be the integers, a subset of the
ambient output space $\sY=\R$.
The staircase function is $\fstar(x) = \lfloor x \rceil$ defined on
$\sX$, which rounds a linear function onto the integers.
Following \citet{savarese2019function}, we define the norm of a
univariate function $f:\R\rightarrow \R$ as
\begin{align}
\label{eqn:norm}
\|f\| = \frac{1}{2}\max\left(\int_{-\infty}^\infty |f''(x)|^2dx, |f'(-\infty) + f'(+\infty)|\right).
\end{align}
Intuitively, this norm measures how ``wobbly'' $f$ is by summing up the changes in its second derivative.

Consider representing $\fstar$ with linear splines, a family of
piecewise linear functions.
In linear splines, the norm in Equation~\eqref{eqn:norm} becomes
roughly the sum of absolute changes in slope between piecewise
segments.
If we represent $\fstar$ directly with a linear spline $\fstd$, the
norm of $\fstd$ has to be large due to the large number of slope
changes: $\| \fstd\| = (N-1)/\delta$ (Figure~\ref{fig:simple} left).

\paragraph{Reduced complexity.}
Suppose we have access to a denoiser $\Pi(y)=\lfloor y \rceil$, which
projects onto $\sV=\mathbb{Z}$.
Then a linear function $\fthetabase$ composed with $\Pi$ can
represent the staircase on $\sX$, reducing the norm to 1
(Figure~\ref{fig:simple} right).
By not requiring $\fthetabase$ to represent the local complexity and
discreteness in $\fstar$, the base predictor $\fthetabase$ better
captures the underlying globally linear structure of $\fstar$.

\paragraph{Improved extrapolation.}
Beyond improving generalization for inputs drawn from the same distribution as
the training data, we may also consider out-of-distribution (OOD)
generalization to inputs drawn from a different distribution, where machine
learning models have been shown to fail (e.g., simple
corruptions~\citep{hendrycks2019benchmarking} and shifts across
topics~\citep{fisch2019mrqa,yogatama2019learning,gururangan2020don}).  This
simple example suggests that composed fine-tuning can improve
out-of-distribution (OOD) performance since the simpler linear base predictor
also enables perfect extrapolation outside the training domain.

\subsection{Analysis for 2-layer ReLU networks}
\label{sec:relu}

We now extend our analysis from splines to neural networks and high dimensional (discrete) outputs.
Formally, we take the valid set $\sV = \{y^*_1,\dots,y^*_N\}$ to be a
discrete set over $N$ output vectors in $\R^k$ and $\fstar$ is a
piecewise constant function defined on $N$ disjoint intervals $\sX =
\uplus_{i=1}^N [x^l_i, x^u_i]$ (in ascending order),
where there is a $\delta>0$ gap between each interval and the next.
The target function $\fstar$ is defined such that if $x\in [x^l_i,
x^u_i]$, then $\fstar(x) = y^*_i$.
We leave analysis of discrete, high-dimensional input spaces (such as in text tasks) to future work.

We study 2-layer ReLU networks, often studied as a first step towards
understanding the expressivity of neural
networks~\cite{neyshabur2014implicit,savarese2019function,eldan2016depth}.
Following~\citet{savarese2019function}, we define the 2-layer ReLU network $\ftheta \in \sH$ with parameters $\theta$ as
\begin{align*}
    \ftheta(x) = \sum_{l=1}^h w_l^{(2)} \left[\langle w_l^{(1)}, x\rangle + b_l^{(1)} \right]_{+} + b_l^{(2)}
\end{align*}
on $x\in \R^d$, where we will take $d=1$ throughout. Here, $[x]_+ =
\max(x,0)$ is the element-wise ReLU nonlinearity.
We let $W^{(1)}\in \R^{h\times d}$ denote
the first layer weight matrix with $w_l^{(1)} \in \R^d$ as rows.
Let $b^{(1)}, b^{(2)},
w^{(2)}\in\R^h$ be bias and weight vectors with $b_l^{(1)}, b_l^{(2)}, w_l^{(2)}\in\R$
as elements respectively.
The parameters $\theta=(h, W^{(1)},b^{(1)}, b^{(2)},
w^{(2)})$ contain the hidden unit size $h\in \mathbb{N}$
and all weights and biases.
We let $\Theta=\{\theta: h\in \mathbb{N}; W^{(1)}\in \R^{h\times d}; b^{(1)}, b^{(2)},
w^{(2)}\in\R^h\}$ denote this parameter space.

\paragraph{Measure of complexity.}
Following~\citet{savarese2019function}, the complexity of a network is
associated with the squared Euclidean norm of the weights
\begin{align*}
    C(\theta) = \frac{1}{2}(\|w^{(2)}\|^2_2 + \|W^{(1)}\|^2_F).
\end{align*}
The norm of $f$ is the minimum norm required to represent it:
\begin{align}
    \label{eqn:norm-nets}
    \|f\| = \inf_{\theta \in \Theta} C(\theta) \text{~s.t.~} f_{\theta} = f.
\end{align}
\citet{savarese2019function} showed that this norm is equivalent to
Equation~\ref{eqn:norm} for univariate networks with 1-dimensional inputs and
outputs.
Since these complexity measures typically appear in generalization
bounds~\cite{bartlett2017spectral,neyshabur2017generalization}, we
expect to improve generalization error by reducing these complexity
measures.

\paragraph{Minimum complexity reduces with a denoiser.}
Given $\Pi(y) \in \argmin_{y^*\in \sV} \|y^* - y\|_2$ which is
a Euclidean projection onto $\sV$ (breaking ties arbitrarily), we want to compare
the norms of $\fstd$ that represents $\fstar$ directly and the minimum
norm base predictor $\fthetabase$ that represents $\fstar$ when composed with the denoiser $\Pi$:
\begin{align}
    \fthetabase = \argmin_{f\in\sH}\{\| f\| : \Pi \circ f (x) = \fstar(x), x\in\sX\}.
\end{align}
Note that $\|\fthetabase\|\leq \|\fstd\|$ since $\fstd$ is a
feasible solution. Thus composing cannot increase the norm.

\paragraph{Adjacent intervals measure complexity of the target function.}
Our result depends crucially on the number of \emph{non-adjacent} pairs
of intervals in $\fstar$.
Suppose the output dimension is $k=1$.
We define a pair of interval indices $(i, i+1)$ as $\emph{adjacent}$ if
there is no valid output value $y \in \sV$ in between $y^*_{i}$ and
$y^*_{i+1}$; that is, none of  $y^*_{i} < y < y^*_{i+1}$ or $y^*_{i} >
y > y^*_{i+1}$ hold.
The number of non-adjacent interval pairs characterizes the complexity
of $\fstar$.
Let $|J|$ be the number of non-adjacent pairs and $|I|$ be the number
of adjacent pairs, where $|I|+|J|=N-1$.
Our bound also depends on $L = \min_i |y^*_{i}-y^*_{i+1}|$ and $U =
\max_i |y^*_{i} - y^*_{i+1}|$, the min and max separation between valid
points.
For higher output dimensions ($k>1$), let $y^*_{i,j}$ be the $j$-th
output coordinate of the $i$-th valid point and let
$|J_j|,|I_j|,L_j,U_j$ be the analogous quantities for each output
coordinate $j\in[k]$.
\begin{theorem}
\label{thm:main}
Let the valid output space $\sV\subset \R^k$ be a set over $N$
  multivariate output vectors $\{y^*_1,\dots,y^*_N\}$ in $\sV$.
Let $\fstar:\R\rightarrow \R^k$ be a piecewise constant function
  defined on $\sX = \uplus_{i=1}^N [x^l_i, x^u_i]$ where
  $\fstar(x)=y^*_i$ if $x\in[x^l_i,x^u_i]$.
Let $\Delta_x$ be the length of the smallest interval in $\sX$.
Then
\begin{align}
    \frac{\|\fstd\|}{\|\fthetabase\|} = \Omega \left(\frac{N\max_jL_j}{\sum_{j=1}^{k}U_j\left(|J_j|+\delta\frac{|I_j|}{\Delta_x}\right)}\right)
\end{align}
\end{theorem}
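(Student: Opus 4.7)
The plan is to leverage the function-norm characterization behind Equation~\ref{eqn:norm-nets}: by \citet{savarese2019function} the minimum-weight-norm 2-layer ReLU representation of a scalar function $f:\R\to\R$ has norm $\tfrac{1}{2}\max(\int|f''|,|f'(-\infty)+f'(+\infty)|)$, which for piecewise-linear $f$ reduces to half the total variation of $f'$ (the boundary term vanishes after extending $f$ as a constant outside its support). For vector outputs, stacking coordinate-wise scalar networks with disjoint hidden units realizes $\|f\|\le\sum_{j=1}^{k}\|f_j\|_{\mathrm{scalar}}$, while for any joint representation one has $\|f\|\ge\|f_j\|_{\mathrm{scalar}}$ for each coordinate $j$ (the latter because the scalar norm of $f_j$ is bounded by the $W^{(2)}_{j,:}$-row plus $W^{(1)}$ contribution in $C(\theta)$). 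I will therefore lower-bound $\|\fstd\|$ and upper-bound $\|\fthetabasemin\|$ separately via explicit construction.

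For the lower bound, fix $j^\star\in\argmax_j L_j$. Since $\fstd$ must agree with $\fstar$ on every plateau $[x^l_i,x^u_i]$, the derivative $(\fstd_{j^\star})'$ vanishes on every plateau. To transition from the constant value $y^*_{i,j^\star}$ to $y^*_{i+1,j^\star}$ inside the gap $[x^u_i,x^l_{i+1}]$ of length $\delta$, $(\fstd_{j^\star})'$ must leave $0$, attain magnitude at least $|y^*_{i+1,j^\star}-y^*_{i,j^\star}|/\delta$, and return to $0$, contributing at least $2|y^*_{i+1,j^\star}-y^*_{i,j^\star}|/\delta\ge 2L_{j^\star}/\delta$ to the total variation of $(\fstd_{j^\star})'$. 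Summing over the $N-1$ gaps yields $\|\fstd\|\ge\|\fstd_{j^\star}\|_{\mathrm{scalar}}=\Omega(NL_{j^\star}/\delta)=\Omega(N\max_j L_j/\delta)$.

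For the upper bound, I construct $\fthetabase$ coordinate-wise. Within coordinate $j$, walk through the consecutive pairs $(i,i+1)$: for a \emph{non-adjacent} pair (some other valid value lies strictly between $y^*_{i,j}$ and $y^*_{i+1,j}$), place a sharp piecewise-linear transition inside the gap going exactly from $y^*_{i,j}$ to $y^*_{i+1,j}$, costing at most $2U_j/\delta$ in total variation of the derivative. For an \emph{adjacent} pair, since no other valid value lies between, $\fthetabase_j$ need only land inside the Voronoi basin of $y^*_{i,j}$ on the plateau, so I spread the transition across the entire plateau $[x^l_{i+1},x^u_{i+1}]$ of length at least $\Delta_x$, costing at most $2U_j/\Delta_x$. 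Concatenating these segments into a continuous piecewise-linear function and interpreting it as the min-norm scalar 2-layer network gives $\|\fthetabase_j\|_{\mathrm{scalar}}=O(U_j|J_j|/\delta+U_j|I_j|/\Delta_x)$. Summing over coordinates yields $\|\fthetabasemin\|=O\big(\sum_j U_j(|J_j|/\delta+|I_j|/\Delta_x)\big)$, and dividing the two bounds and pulling a common $1/\delta$ from numerator and denominator produces the claimed ratio.

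The main obstacle is the multivariate projection: $\Pi$ performs Euclidean projection onto $\sV\subset\R^k$, whereas the counts $|I_j|,|J_j|$ are defined per coordinate. I expect to resolve this by tightening the margin used in each coordinate's construction so that $\fthetabase(x)$ lies strictly inside the correct per-coordinate nearest-value cell at every $x\in\sX$, which suffices to land inside the correct joint Voronoi cell of $\sV$ in $\R^k$. A secondary but genuine subtlety is ensuring that adjacent- and non-adjacent-pair transitions meet continuously without double-counting slope changes at their interfaces; careful choice of breakpoints and offsets handles this, but the bookkeeping is routine once the coordinate-wise construction is fixed.
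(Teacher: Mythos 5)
Your proposal follows essentially the same route as the paper's proof: a total-variation/linear-spline lower bound of order $N L_{j}/\delta$ applied coordinate-wise to $\fstd$, an explicit coordinate-wise piecewise-linear construction for the base predictor using sharp in-gap transitions for non-adjacent pairs and plateau-wide transitions within the Voronoi basin for adjacent pairs, a concatenation of per-coordinate networks with disjoint hidden units for the multivariate upper bound, and the same resolution of the projection subtlety (coordinate-wise nearest values of a single $y^*_i$ imply joint Euclidean nearest in $\R^k$). The interface bookkeeping you flag is handled in the paper by small $\epsilon$-offsets in the endpoint construction, but the substance of your argument is identical.
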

See Appendix~\ref{app:analysis} for a proof.
The ratio $\frac{\|\fstd\|}{\|\fthetabase\|}$ is the complexity savings from composed fine-tuning.
In the theorem, $|J_j|$ measures the inherent complexity of the target function coordinate
$j$. If $|J_j|$ are sublinear in $N$ (i.e., the target function is simple) and valid points are evenly
spaced, then the ratio is $\Omega(1/\delta)$, which can be arbitrarily
large for a fixed output dimension as $\delta\rightarrow 0$ or $N\rightarrow \infty$.
If any $|J_j|$ is linear in $N$ (i.e., the target function is complex), then there is only a constant factor ratio in
the worst case.
Overall, if $\fstar$ is not too complex (has small $|J_j|$ values) with respect to its discrete output
space, we can learn a simpler base predictor that still represents $\fstar$ when composed with the denoiser.

\section{Experiments}

We show on two pseudocode-to-code datasets that composed fine-tuning improves generalization over standard fine-tuning (Section~\ref{sec:code}).
Composed fine-tuning also boosts generalization when combined with other fine-tuning methods such as backtranslation, which has additional access to a large unlabeled dataset during fine-tuning (Section~\ref{sec:ablations}).
We also show that composed fine-tuning can lead to better OOD performance on shifted pseudocode distributions and unseen programs in pseudocode-to-code (Section~\ref{sec:code}), as well as unseen inputs in an image generation task (Section~\ref{sec:fonts}).

\subsection{Main results}
\label{sec:code}
We evaluate composed fine-tuning on two pseudocode-to-code datasets,
\sanstype~and \spoc.

\paragraph{Prediction task and pre-trained model.}
We consider full-program pseudocode-to-code translation.
The input space $\sX$ is pseudocode, the ambient output space $\sY$ is the set of strings, and the set of valid outputs $\sV$ are strings
that compile with the \texttt{g++} compiler.
In contrast to previous works which decompose the problem into
line-by-line translation and use information from the
compiler~\cite{kulal2019spoc,yasunaga2020repair}, we translate the
entire program at once without compiler access at test time.
Following~\citet{yasunaga2020repair}, the pre-training objective for
both pseudocode-to-code datasets consists of repairing random semantic
and syntactic corruptions of unlabeled code examples (using some domain knowledge of code).
See Appendix~\ref{app:synthetic} and Appendix~\ref{app:spoc} for pre-training details for each dataset.

\paragraph{Models for \sanstype.}
We use Transformer encoders/decoders~\cite{vaswani2017attention} for the base predictor and the denoiser (see Appendix~\ref{app:synthetic}).
The baseline model, standard fine-tuned model, base predictor, and denoiser are Transformer encoder-decoders~\cite{vaswani2017attention} where the encoder and decoder each have 3 (or 6) layers, 2 attention heads, embedding dimension 256, and fully connected embedding dimension 1024.
The base predictor in composed fine-tuning is initialized from the standard fine-tuned model.
Models trained from scratch are trained for 500 epochs. For composed fine-tuning, the denoiser is pre-trained for 50 epochs, and we fine-tune for 300 epochs.
In all models, we use weight decay 0.0001, dropout probability 0.1, attention dropout probability 0.2, and ReLU dropout probability 0.2 as regularization.

\paragraph{Models for \spoc.}
We follow similar protocols for \spoc with different architectures and training details (see Appendix~\ref{app:spoc}).
For \spoc, all models are Transformer encoder-decoders with 5 layers, 8 attention heads, embedding dimension 256, and fully connected embedding dimension 1024.
We use dropout probability 0.4, attention dropout probability 0.2, ReLU dropout probability 0.2, weight decay 0.0001, following~\citet{guzman2019flores}.
For standard fine-tuning, we use a decaying label smoothing schedule with smoothing parameter starting with 0.2 for 100 epochs, then 0.1 and 0.05 for 25 epochs each.
We found that reducing the label smoothing parameter near the end of training improves generalization for all models.
For the denoiser, we also use a decaying label-smoothing schedule with smoothing parameter starting at 0.2 for 5 epochs, then decaying to 0.1 for 1 epoch, and finally 0.0 for 3 epochs.
For composed fine-tuning, we initialize the base predictor from the standard fine-tuning model and train for 20 epochs.

\paragraph{Hyperparameters.}
We use $\lambda=1$ to balance between the fitting the composed and direct objectives.
During inference, we use greedy decoding for simplicity, although
 beam search or compiler access could improve the results further.
We use the validation loss for early stopping for all models, and we chose the maximum number of epochs by inspecting training loss convergence.
 For composed fine-tuning, we specifically use the validation loss of the composition $\Pi \circ \fthetabase$ for early stopping.

\paragraph{Metrics.}
We report the fraction of examples that successfully compiled as well as the fraction of examples that are correct (accuracy).
A program is \emph{correct} when executed on a set of input test cases if, on each input test case, the program output matches the corresponding gold output.

\subsubsection{\sanstype}
\label{sec:synthetic}
We introduce \sanstype, a synthetic pseudocode-to-code dataset where the pseudocode specifies all but the declaration types, leaving global type consistency to the model (see
Figure~\ref{fig:code-example-denoise}).
Composed fine-tuning can avoid modeling everything
by learning simple a base predictor $\fthetabase$ that can do local translation
while the denoiser $\Pi$ enforces global constraints such as type
consistency.

\begin{table}
\centering
\scalebox{0.8}{
\begin{tabular} {l r r r r}
\toprule
 & \# Layers & Compiled & Accuracy & Relative\\
\midrule
\sanstype & & & & \\
~~~~Baseline & 6 & 48.6 & 36.6 & -54.1 \\
~~~~Standard fine-tuning & 6 & 86.8 & 79.8 & 0.0 \\
~~~~Composed fine-tuning & 6 & \textbf{90.4} & \textbf{84.4} & \textbf{5.8}\\
\sanstype~(OOD) & & & & \\
~~~~Baseline & 6 & 37.2 & 14.2 & -66.5 \\
~~~~Standard fine-tuning & 6 & 65.0 & 42.4 & 0.0 \\
~~~~Composed fine-tuning & 6 & \textbf{81.6} & \textbf{52.8} & \textbf{24.5}\\
\midrule
\spoc~\testw & & & & \\
~~~~Baseline & 10 & 51.3 & 34.5 & -6.8  \\
~~~~Standard fine-tuning & 10 & 52.5 & 37.0 & 0.0 \\
~~~~Composed fine-tuning & 10 & \textbf{53.9} & \textbf{38.1} & \textbf{3.0} \\
\spoc~\testp~(OOD) & & & & \\
~~~~Baseline & 10 & 24.5 & 12.2 & -17.6 \\
~~~~Standard fine-tuning & 10 & 24.7 & 14.8 & 0.0 \\
~~~~Composed fine-tuning & 10 & \textbf{25.8} & \textbf{15.4} & \textbf{4.1} \\
\bottomrule
\end{tabular}
}
\caption{
    Proportion of generated code (\%) that compiled successfully and that
    executed with a correct output (accuracy) in the \sanstype~and
    \spoc~pseudocode-to-code tasks. Last column is the relative improvement in
    accuracy over standard fine-tuning.
}
\label{table:code-main}
\end{table}

\paragraph{Dataset generation.} The synthetic programs involve 1--4
variables (bools, ints, and strings) drawn from 10 possible variable
names, which are first initialized (by reading stdin) and then
processed by up to 1--5 random operations, including 3 unary operations
per type and 2 binary operations on ints (see Appendix~\ref{app:synthetic}). There are 100 possible
integer values and 10 possible string values. Pseudocode for each line
of code is generated from a few possible templates.
We generate 1000 labeled examples and 20000 unlabeled code examples.

\paragraph{In-distribution and OOD test sets.}
The in-distribution test set has pseudocode generated from the same
templates as the training set. The OOD test set uses a new set of
pseudocode templates created by mixing and matching tokens from the
in-distribution templates.
For example, for the print statement \texttt{cout << <var>}, the
in-distribution pseudocode templates include \texttt{print <var>} and
\texttt{output <var> to stdout} while the OOD pseudocode contains the
templates \texttt{print <var> to stdout}, \texttt{output <var>}, and
\texttt{stdout <var>}.
Although perhaps a benign shift, this is enough to significantly lower the
performance of all models.

\subsubsection{\spoc} \label{sec:spoc}
We also evaluate on the challenging \spoc~pseudocode-to-code
dataset~\cite{kulal2019spoc}, which contains code scraped from
\texttt{codeforces.com} and pseudocode written by crowdsourced workers.
Since we consider the full-program translation task instead of
line-by-line as in previous
works~\cite{kulal2019spoc,yasunaga2020repair}, we filter out training
examples where the code is longer than 1000 tokens after
pre-processing, retaining over 95\% of the training examples.

\paragraph{In-distribution and OOD test sets.}
We use the two given \spoc~test sets, \testw~and \testp.
\testw~contains examples with pseudocode written by different crowdworkers on the same coding problems as the training set, while \testp~contains examples from unseen coding problems.
We consider \testp~to be the out-of-distribution
test set.
We report results on the full (unfiltered) test sets.

\begin{figure}[tbp]
  \centering
  \subfloat[Vary labeled data]{\includegraphics[width=0.36\textwidth]{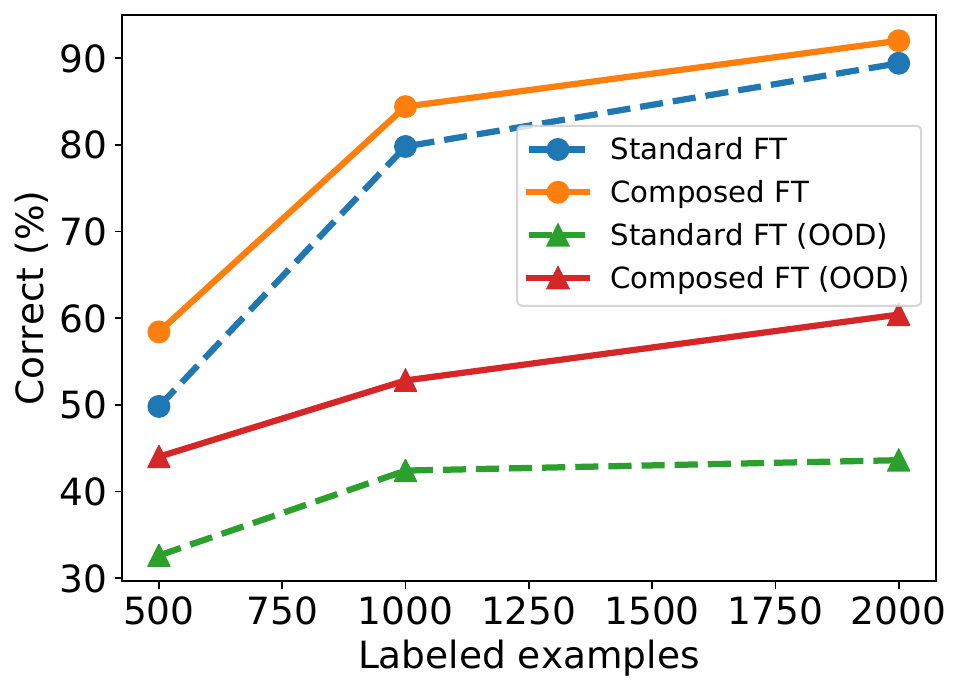}}
  \subfloat[Vary unlabeled data]{\includegraphics[width=0.36\textwidth]{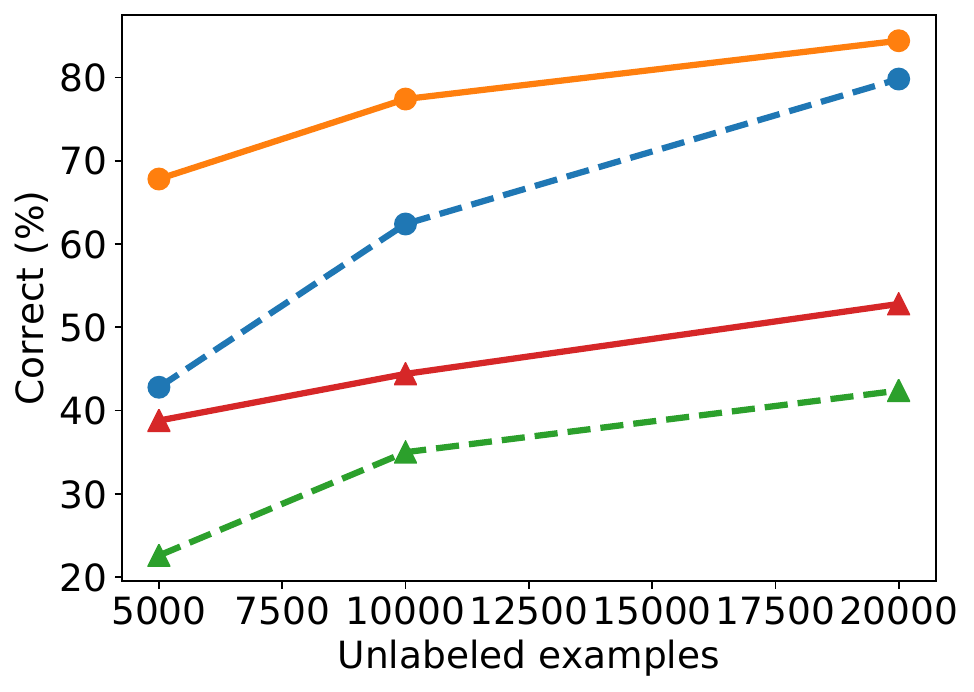}}
  \caption{Performance with varying labeled and unlabeled data in \sanstype. Composed fine-tuning improves more over standard fine-tuning as sample size decreases.  }\label{fig:synthetic-labeled-unlabeled}
\end{figure}

\subsubsection{Results}
Table~\ref{table:code-main} shows the results for a baseline
Transformer trained from scratch, standard fine-tuning, and composed
fine-tuning for \sanstype~and \spoc.
Composed fine-tuning gives about 6\% and 3\% relative improvement over
standard fine-tuning for \sanstype~and \spoc~respectively on
in-distribution test sets. On OOD test sets, the relative improvement
grows to 25\% and 4\%, respectively.
Thus, composed fine-tuning improves generalization on in-distribution
examples and particularly out-of-distribution examples.
Figure~\ref{fig:synthetic-labeled-unlabeled} shows results on varying
unlabeled data (used for pre-training the denoiser) and labeled data sizes, where composed fine-tuning improves
more over standard fine-tuning as sample size decreases, and the larger
relative improvement in the OOD split holds across all sample sizes.

Figure~\ref{fig:code-example-denoise} gives an example \sanstype~input
with the output of the base and composed predictors. With the denoiser,
the base predictor does not have to output all the correct variable
types.
Here, the denoiser correctly instantiates \texttt{var\_5} and corrects the type of \texttt{temp}.

\subsection{Ablations}
\label{sec:ablations}

In this section, we compare to standard fine-tuning from a larger pre-trained denoiser (Section~\ref{sec:larger}), ablate the composed training step (Section~\ref{sec:testtime}), combine composed fine-tuning with other fine-tuning methods (Section~\ref{sec:othermethods}), and investigate the effect of different pre-trained denoisers (Section~\ref{sec:pretraining-obj}).
We find that composed training is important for the gains from composed fine-tuning and that composed fine-tuning is complementary with other fine-tuning methods. Composed fine-tuning also works well for denoisers trained with simple synthetic corruptions that do not have any domain knowledge about code.

\subsubsection{Standard fine-tuning from a larger denoiser}
\label{sec:larger}
We consider standard fine-tuning from a 12 layer denoiser, which is double the size of the denoiser in the composed fine-tuning models. Although the total number of layers is the same as composed fine-tuning models, this is not a fair comparison since the pre-trained denoiser is much larger (benefits from the large unlabeled set more) and the number of learned parameters is double that of composed fine-tuning.

Table~\ref{table:synthetic-code-2} shows the results for larger baseline and standard fine-tuning models (with 12 layers) on \sanstype.
On the in-distribution \sanstype~split, we find that composed fine-tuning (84.4\%) still improves over standard fine-tuning (79.8\%) in this setting. However, on the OOD \sanstype~split, standard fine-tuning with the larger denoiser (58.4\%) has higher accuracy than composed fine-tuning (52.8\%). Increasing the size of the pre-trained denoiser seems to give a large boost for downstream OOD accuracy. However, with a much smaller pre-trained model, composed fine-tuning still outputs a higher proportion of successfully compiled code (81.6\%) than the larger standard fine-tuning model (74.0\%).
On \spoc, we find that using more layers (increasing from 10 to 20 layers) does not significantly change the baseline performance.
While the compilation rate increases slightly, increasing the number of layers slightly degrades the accuracy of its output programs.

\begin{table}[tbp]
\centering
\scalebox{0.8}{
\begin{tabular} {l r r r r}
\toprule
 & \multirow{2}{*}{\# Layers} & \multirow{2}{1.5cm}{Test-time denoiser?} & \multirow{2}{1.2cm}{Compiled} & \multirow{2}{*}{Accuracy}\\\\
\midrule
~~~~Composed fine-tuning & 6 & Y & \textbf{90.4}  & \textbf{84.4}\\
Larger denoiser & & & & \\
~~~~Baseline & 12 & N  & 58.6 & 41.6 \\
~~~~Standard fine-tuning & 12 & N  & 87.0 & 79.8 \\
Test-time denoiser & & & & \\
~~~~Baseline & 6 & Y & 76.4 & 58.4 \\
~~~~Standard fine-tuning & 6 & Y & 87.3 & 80.4 \\
\midrule
~~~~Composed fine-tuning (OOD) & 6 & Y & \textbf{81.6}  & 52.8\\
Larger denoiser (OOD) & & & & \\
~~~~Baseline & 12 & N  & 48.2 & 22.4 \\
~~~~Standard fine-tuning & 12 & N  & 74.0 & \textbf{58.4} \\
Test-time denoiser (OOD) & & & & \\
~~~~Baseline & 6 & Y & 64.8 & 29.0 \\
~~~~Standard fine-tuning & 6 & Y & 76.2 & 49.8 \\
\bottomrule
\end{tabular}
}
\caption{
    Results of standard fine-tuning with access to a larger denoiser and baselines with a test-time denoiser on \sanstype. On \spoc, we find that does not significantly change the baseline performance and that using a test-time denoiser does not improve accuracy of \spoc~baselines.
}
\label{table:synthetic-code-2}
\end{table}

\subsubsection{Comparisons to baselines with a test-time denoiser}
\label{sec:testtime}
We compare to baselines where we apply the pre-trained denoiser at test time (a test-time denoiser). With the test-time denoiser, the baselines have the same architecture and number of trainable parameters as the composed predictor. The only difference is that composed fine-tuning composes the base predictor with the denoiser during training.
Composed fine-tuning improves over models with a test-time denoiser, highlighting the importance of training with the fixed denoiser.

Table~\ref{table:synthetic-code-2} shows the results on \sanstype.
While standard fine-tuning improves with a test-time denoiser, composed fine-tuning still improves over standard fine-tuning by 4\%.
We find that using a test-time denoiser does not improve accuracy of \spoc~baselines.

\begin{table}
\centering
\scalebox{0.8}{
\begin{tabular} {l r r r r r}
\toprule
 & \multirow{2}{*}{\# Layers} & \multirow{2}{1.5cm}{Test-time denoiser?} & \multirow{2}{1.8cm}{Unlabeled + Labeled?} & \multirow{2}{*}{Compiled} & \multirow{2}{*}{Accuracy}\\\\
\midrule
BART-style fine-tuning & 6 & N & N & 89.4 & 83.2 \\
BART-style fine-tuning & 6 & Y & N & 90.0 & 83.6 \\
BART-style fine-tuning & 12 & N & N & 87.8 & 81.2 \\
Composed BART-style fine-tuning & 6 & Y & N & \textbf{91.4} & \textbf{85.4}\\
\midrule
Backtranslation (BT) & 6 & N & Y & 96.4 & 86.4 \\
Backtranslation (BT) & 6 & Y & Y & 96.6 & 86.0 \\
Backtranslation (BT) & 12 & N & Y & \textbf{98.8} & 86.2 \\
Composed fine-tuning + BT & 6 & Y & Y & 96.8 & \textbf{87.0}\\
\bottomrule
\end{tabular}
}
\caption{
    Results of BART-style fine-tuning and backtranslation on \sanstype. Proportion of generated code (\%) that compiles and that results in a correct output (accuracy).
    Backtranslation uses both unlabeled and labeled data together when learning the downstream task for targeted usage of unlabeled data.
}
\label{table:other-comps}
\end{table}

\subsubsection{Composed fine-tuning improves other fine-tuning methods}
\label{sec:othermethods}
Composed fine-tuning is a general strategy. While we applied it to standard fine-tuning in the previous sections, we can also consider combining composed training with other fine-tuning methods. We show complementary gains when combining composed fine-tuning with BART-style fine-tuning~\cite{lewis2020bart} and backtranslation~\cite{sennrich2016monolingual}.

\paragraph{BART-style fine-tuning.}
BART~\cite{lewis2020bart} proposes a modified two-stage fine-tuning strategy which freezes the later layers of the encoder-decoder in the first stage and then fine-tunes the entire network in the second stage (see Appendix~\ref{app:synthetic} for details).
We apply composed fine-tuning to this by initializing the base predictor from the result of BART-style fine-tuning and doing composed training with the fixed pre-trained denoiser. Table~\ref{table:other-comps} shows that this improves further over BART-style fine-tuning (by 2--4\%), showing that the benefits of composed fine-tuning and BART-style fine-tuning are complementary.

\paragraph{Backtranslation.}
A property of pre-training is that we do not need access to unlabeled data when fine-tuning on the downstream task.
In contrast, backtranslation~\cite{sennrich2016monolingual} is a strong method that uses the large unlabeled dataset together with the labeled data in downstream learning, which is expensive but allows the method to target the usage of unlabeled data to the downstream task.
Backtranslation uses an output-to-input model (learned on the labeled data) applied on unlabeled outputs to generate additional synthetic inputs.
We employ composed fine-tuning on top by initializing the base predictor from the model trained by backtranslation, showing complementary benefits to backtranslation (by 0.6\% to 1\%) in Table~\ref{table:other-comps}.

\subsubsection{Effect of different denoisers}
\label{sec:pretraining-obj}
Naively, better pre-trained denoisers should improve the gains with composed fine-tuning.
For example, the \spoc~denoiser is not as effective as the \sanstype~denoiser, which correlates with less gains with composed fine-tuning.
Interestingly, the relationship between pre-trained denoiser quality and downstream performance is not always straightforward.
Table~\ref{table:masked-lm} shows results when we train a \sanstype~denoiser using random word deletions (no perturbations with any domain knowledge of code). The baseline with test-time denoiser worsens from 58.4\% to 46.2\% in accuracy, suggesting that the denoiser is worse when trained with the random deletion objective. However, the downstream standard and composed fine-tuning accuracies improve substantially for both in- and out-of-distribution examples.
Thus the new denoiser, which uses more generic perturbations, is worse at correcting baseline outputs but improves fine-tuning results.
The effect of pre-training on the downstream performance after adaptation merits further investigation.

\begin{table}
\centering
\scalebox{0.8}{
\begin{tabular} {l r r r r}
\toprule
 & \multirow{2}{*}{Denoiser pre-training corruptions} & \multirow{2}{*}{\# Layers} & \multirow{2}{*}{Compiled} & \multirow{2}{*}{Accuracy}\\\\
\midrule
Test-time denoiser & & & & \\
~~~~Baseline & Code-based & 6 & \textbf{76.4} & \textbf{58.4} \\
~~~~Baseline & Random deletions & 6 & 64.4 & 46.2 \\
\midrule
\sanstype & & & & \\
~~~~Standard fine-tuning & Code-based & 6 & 86.8 & 79.8\\
~~~~Standard fine-tuning & Random deletions & 6 & \textbf{92.0} & \textbf{81.2} \\
\sanstype~(OOD) & & & & \\
~~~~Standard fine-tuning & Code-based & 6 & 65.0 & 42.4 \\
~~~~Standard fine-tuning & Random deletions & 6 & \textbf{69.8} & \textbf{54.0} \\
\midrule
\sanstype & & & & \\
~~~~Composed fine-tuning & Code-based & 6 & 90.4 & 84.4\\
~~~~Composed fine-tuning & Random deletions & 6 & \textbf{94.8} & \textbf{86.2} \\
\sanstype~(OOD) & & & & \\
~~~~Composed fine-tuning & Code-based & 6 & \textbf{81.6} & 52.8 \\
~~~~Composed fine-tuning & Random deletions & 6 & 75.6 & \textbf{60.2} \\
\bottomrule
\end{tabular}
}
\caption{
    \sanstype~results on standard and composed fine-tuning when changing the pre-training denoising autoencoder objective from correcting code-based perturbations to correcting random, unstructured word deletions. The denoising quality of the random deletion denoiser is worse when applied to a baseline model, but it leads to better downstream performance both in-distribution and OOD.
}
\label{table:masked-lm}
\end{table}

\subsection{OOD generalization in image generation}
\label{sec:fonts}
\begin{figure}[tbp]
  \centering
  \subfloat[Baseline]{\includegraphics[width=0.3\textwidth]{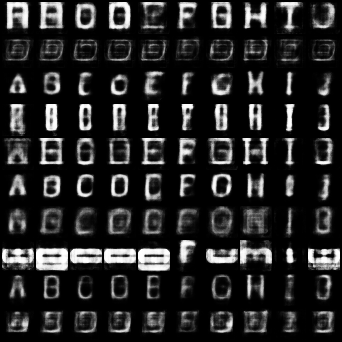}}
  \hfill
  \subfloat[Composed]{\includegraphics[width=0.3\textwidth]{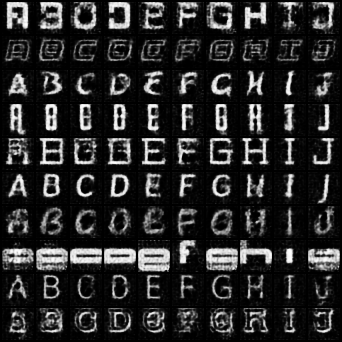}}
  \hfill
  \subfloat[Base]{\includegraphics[width=0.3\textwidth]{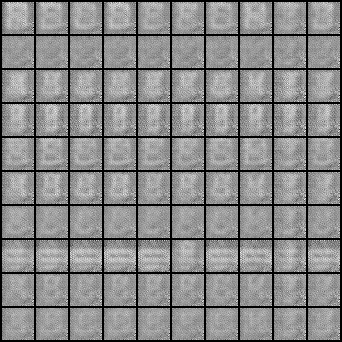}}
\caption{Generated letters A-J for 10 randomly selected fonts.
    \textbf{(a)} The baseline predictor makes blurry outputs with many artifacts.
    \textbf{(b)} The composed predictor (base + denoiser) makes clearer outputs with more distinct font patterns.
    \textbf{(c)} The base predictor produces blurrier outputs corresponding to a lower norm model.}\label{fig:fonts}
\end{figure}

We evaluate the general framework of composed training with a denoiser on font image generation from unseen attribute combinations, a natural OOD generalization task.
Generating missing characters is useful when prototyping a new font~\cite{miyazaki2020automatic,tenenbaum2000separating}. From a few user-supplied characters, the model fills in the rest.
This task closely mirrors the theoretical setup, where the input is low-dimensional (index of the font and character type) and the output is high-dimensional (image).
Qualitatively, image samples from our composed predictor are clearer and has less artifacts.

\paragraph{Prediction task.}
We map two one-hot vectors, corresponding to the character identity (out of 62 possible) and the font (out of 100 fonts), to generate $32\times 32$ grayscale font images.
Here, valid font images have clean lines and adhere to the font styling.
We test image sample quality directly by computing the pixel-wise squared error with respect to ground truth test images.

\paragraph{Data.}
We use a dataset of 56k fonts originally scraped from the Internet~\cite{bernhardsson2016fonts}.
We randomly split the 6200 labeled examples (62 characters $\times$ 100 fonts) into 2500 training examples, 100 validation examples, and 3600 test examples.
The training examples contain a random subset of the characters for each font. The models must generate the unseen characters of each font with the correct font styling at test-time.
The denoiser uses additional unlabeled images for $\sim$50k other fonts.

\paragraph{Model architectures.}
The baseline and base predictors are 7-layer fully-connected networks that map attribute encodings to images (see Appendix~\ref{app:fonts}) .
The denoiser $\Pi$ is a 3-layer U-Net~\cite{ronneberger2015unet} image-to-image model.
The denoiser is pre-trained to sharpen unlabeled font images distorted by a Gaussian blur filter with randomly sampled radii in $[0, 2]$.
We train all predictors using the pixel-wise squared error loss and tune L2 regularization strength on a validation set.
In the composed model, we set $\lambda=0$ in \eqref{eqn:composed-objective}, using only the composed loss.
We also report gains with other noising functions (embossing, contrast perturbations) in Appendix~\ref{app:fonts}.
For this task, we train all the predictors from scratch. We do not consider
fine-tuning initialized from the denoiser in this task since the predictor is
not an image-to-image model like the denoiser.

\paragraph{Results.}
The composed predictor achieves an 11\% reduction in test MSE (0.193 to 0.171) compared to the baseline predictor.
We also find that adding dropout improves the MSE further to 0.165, suggesting that dropout helps in finding a lower complexity base predictor.
We visualize the predicted images for some randomly-selected fonts for comparison (Figure~\ref{fig:fonts}).
The base predictor outputs noisy gray images, suggesting that it has learned a lower complexity model.
In contrast, directly outputting clearly defined lines and transitions between black and white pixels requires a relatively high complexity model.
Additional results on varying labeled and unlabeled data size are in Appendix~\ref{app:fonts}, where the performance of the composed model improves upon the baseline on all instances.

\section{Related work and discussion}
\label{sec:discussion}

\paragraph{Sequence-to-sequence pre-training.}
Denoising autoencoders (DAE) are classical building blocks for
unsupervised deep representation learning~\cite{vincent2008denoise,
vincent2010stacked}.
Recently, DAEs have been used for large scale sequence-to-sequence
pre-training as in BART~\cite{lewis2020bart} or
T5~\cite{raffel2019exploring}.
Machine translation methods also use monolingual data in both the
source and target languages to improve their
models~\cite{sennrich2016monolingual,cheng2016sslnlp}.
Pretraining methods use language modeling on monolingual data to
initialize the encoder and
decoder~\cite{ramachandran2018pretraining,skorokhodov2018ssl,devlin2019bert}.
Our work is complementary to these pre-training methods in that we aim
to improve the fine-tuning step to fully leverage the pre-trained
model.

\paragraph{Backtranslation.}
Back-translation methods generate additional synthetic parallel
examples by training on the backwards (target to source)
problem~\cite{sennrich2016monolingual}. While back-translation is a
strong baseline method, it requires using the large unlabeled dataset
for every downstream task. We showed that applying composed fine-tuning
to backtranslation gives complementary gains.

\paragraph{Semantic parsing and structured prediction.}
Some recent semantic parsing works have explicitly provided output constraints
using abstract syntax trees (AST) and enforcing type
constraints~\cite{yin2017syntactic,krishnamurthy2017neural,xiao2016sequence,dong2016logical}.
\citet{krishnamurthy2017neural} note that enforcing type constraints during
training not only prevents invalid outputs but also improves generalization,
supporting our results.
Structured prediction spans applications including
speech~\cite{zhang2013denoising}, vision~\cite{mueller2013semantic},
and medical diagnosis~\cite{jagannatha2016structured}. Many approaches
use graphical models (on top of neural models) for enforcing validity,
e.g. HMMs and CRFs in OCR and sequence
tagging~\cite{kassel1995comparison,huang2015bidirectional}.

These approaches typically require careful engineering to integrate the
graphical model with a neural component and do not consider the simplicity
benefits of composition.
\citet{mostajabi2018regularizing} have a similar goal of squeezing more
information out of structured output spaces. However, we also use
unlabeled output data and leverage the frozen denoiser to improve
OOD generalization. Since we perform denoising
autoencoding, we use the entire denoiser (instead of half) to correct outputs.
Work on approximate inference for structured
prediction~\cite{tu2019benchmarking} model the energy scoring and inference
processes jointly, which is possibly complex to learn. We learn the denoiser
(which like an inference algorithm in this comparison) separately on unlabeled
outputs. Composing \emph{simplifies} the base predictor for improved generalization.

\paragraph{Semi-supervised learning.}
Like semi-supervised learning, composed fine-tuning leverages large amounts of
unlabeled data through the pre-trained denoiser.  However, semi-supervised
learning works typically use unlabeled \emph{input}
data~\cite{tarvainen2017mean,miyato2018virtual,shu2018dirtt,berthelot2019mixmatch},
whereas we have ``unlabeled'' outputs.
In classification, unlabeled outputs can help with handling label
shift~\cite{lipton2018labelshift,azizzadenesheli2019reglabel}, but
otherwise there is no output structure.
If both unlabeled inputs and outputs are available, our method is
complementary with these semi-supervised methods.

\paragraph{Catastrophic forgetting.}
Catastrophic forgetting is the phenomenon that continually-trained models lose
the ability to do earlier tasks. This is related to our observation that
fine-tuning ``destroys'' some of the pre-trained output structure.
Methods for mitigating catastrophic
forgetting~\cite{french1999catastrophic,mccloskey1989catastrphic,ratcliff1990connectionist}
often aim to learn a model that does well on all tasks it has been trained on.
This motivates multi-task
learning/rehearsal~\cite{rusu2015policy,parisotto2015actor} (which uses data
from all tasks at once, training jointly) and elastic weight
consolidation~\cite{kirkpatrick2017overcoming} (which tries to partition the
network into separate experts for each task). Our goal is different: we do not
require the fine-tuned model to do the pre-training task well, and we do not
access the unlabeled data during downstream learning, making multi-task methods
inapplicable.

\paragraph{Lightweight fine-tuning.}
While we consider using the pre-trained denoiser as initialization for
the base predictor in this paper, this requires a model double the size
of the denoiser.
In generality, since the base predictor can be simpler than the standard fine-tuning predictor,
we may be able to learn a small base predictor that drastically reduces the
total number of parameters that need to be learned and stored for each
downstream task.
For example, lightweight fine-tuning methods such as adapter-tuning,
prefix-tuning, and prompt
tuning~\cite{houlsby2019parameter,li2021prefix,lester2021power} require adding
only 0.1\% more parameters to achieve comparable performance to standard
fine-tuning.
Like our work, lightweight fine-tuning works also find that freezing pre-trained parameters
tends to improve OOD results.
Our theoretical analysis perhaps provides a starting point for understanding
why OOD performance can improve with frozen parameters.

\section{Conclusion}
Many tasks in machine learning require generating outputs with rich structure (images,
text, music, proteins, etc.), for which unlabeled outputs are ubiquitous.
While pre-training on these unlabeled outputs is an effective way to learn the output structure,
we showed that standard fine-tuning fails to fully leverage the pre-trained model.
Composed fine-tuning freezes the pre-trained parameters and learns a base predictor composed with the pre-trained model,
leading to a simpler base predictor that generalizes better both in- and out-of-distribution.
Our work prompts further study into fine-tuning methods that leverge frozen pre-trained components to simplify the learning problem,
reduce the number of learned parameters, and achieve better generalization for prediction problems with rich output structure.

\section{Acknowledgements}
We thank Michi Yasunaga, Robin Jia, Albert Gu, Karan Goel, Rohan Taori, and reviewers for helpful discussions and comments. SMX is supported by an NDSEG Graduate Fellowship. The work is partially supported by a PECASE award, SDSI, and SAIL at Stanford University.

\section{Reproducibility}
All data and code for reproducing the experiments are on our \href{https://worksheets.codalab.org/worksheets/0x9187b7eee9d2453389bb63dfb45c4a89}{CodaLab worksheet (\texttt{https://worksheets.codalab.org/worksheets/0x9187b7eee9d2453389bb63dfb45c4a89})} and \href{https://github.com/p-lambda/composed_finetuning}{GitHub repository (\texttt{https://github.com/p-lambda/composed\_finetuning})}.

\newpage
\bibliography{main}

\begin{thebibliography}{73}
\providecommand{\natexlab}[1]{#1}
\providecommand{\url}[1]{\texttt{#1}}
\expandafter\ifx\csname urlstyle\endcsname\relax
  \providecommand{\doi}[1]{doi: #1}\else
  \providecommand{\doi}{doi: \begingroup \urlstyle{rm}\Url}\fi

\bibitem[Azizzadenesheli et~al.(2019)Azizzadenesheli, Liu, Yang, and
  Anandkumar]{azizzadenesheli2019reglabel}
Kamyar Azizzadenesheli, Anqi Liu, Fanny Yang, and Animashree Anandkumar.
\newblock Regularized learning for domain adaptation under label shifts.
\newblock In \emph{International Conference on Learning Representations
  (ICLR)}, 2019.

\bibitem[Bahdanau et~al.(2015)Bahdanau, Cho, and Bengio]{bahdanau2015neural}
Dzmitry Bahdanau, Kyunghyun Cho, and Yoshua Bengio.
\newblock Neural machine translation by jointly learning to align and
  translate.
\newblock In \emph{International Conference on Learning Representations
  (ICLR)}, 2015.

\bibitem[Bartlett et~al.(2017)Bartlett, Foster, and
  Telgarsky]{bartlett2017spectral}
Peter Bartlett, Dylan~J. Foster, and Matus Telgarsky.
\newblock Spectrally-normalized margin bounds for neural networks.
\newblock In \emph{Advances in Neural Information Processing Systems
  (NeurIPS)}, 2017.

\bibitem[Bengio et~al.(2013)Bengio, Leonard, and
  Courville]{bengio2013estimating}
Yoshua Bengio, Nicholas Leonard, and Aaron Courville.
\newblock Estimating or propagating gradients through stochastic neurons for
  conditional computation.
\newblock \emph{arXiv}, 2013.

\bibitem[Bernhardsson(2016)]{bernhardsson2016fonts}
E.~Bernhardsson.
\newblock Analyzing 50k fonts using deep neural networks, 2016.

\bibitem[Berthelot et~al.(2019)Berthelot, Carlini, Goodfellow, Papernot,
  Oliver, and Raffel]{berthelot2019mixmatch}
David Berthelot, Nicholas Carlini, Ian Goodfellow, Nicolas Papernot, Avital
  Oliver, and Colin Raffel.
\newblock Mixmatch: A holistic approach to semi-supervised learning.
\newblock \emph{arXiv}, 2019.

\bibitem[Cheng et~al.(2016)Cheng, Xu, He, He, Wu, Sun, and
  Liu]{cheng2016sslnlp}
Y.~Cheng, W.~Xu, Z.~He, W.~He, H.~Wu, M.~Sun, and Y.~Liu.
\newblock Semi-supervised learning for neural machine translation.
\newblock In \emph{Association for Computational Linguistics (ACL)}, 2016.

\bibitem[Cho et~al.(2014)Cho, van Merrienboer, Gulcehre, Bahdanau, Bougares,
  Schwenk, and Bengio]{cho2014statmt}
Kyunghyun Cho, Bart van Merrienboer, Caglar Gulcehre, Dzmitry Bahdanau, Fethi
  Bougares, Holger Schwenk, and Yoshua Bengio.
\newblock Learning phrase representations using {RNN} encoder-decoder for
  statistical machine translation.
\newblock In \emph{Empirical Methods in Natural Language Processing (EMNLP)},
  pages 1724--1734, 2014.

\bibitem[Devlin et~al.(2019)Devlin, Chang, Lee, and Toutanova]{devlin2019bert}
Jacob Devlin, Ming-Wei Chang, Kenton Lee, and Kristina Toutanova.
\newblock {BERT}: Pre-training of deep bidirectional transformers for language
  understanding.
\newblock In \emph{Association for Computational Linguistics (ACL)}, pages
  4171--4186, 2019.

\bibitem[Dodge et~al.(2020)Dodge, Ilharco, Schwartz, Farhadi, Hajishirzi, and
  Smith]{dodge2020finetuning}
Jesse Dodge, Gabriel Ilharco, Roy Schwartz, Ali Farhadi, Hannaneh Hajishirzi,
  and Noah Smith.
\newblock Finetuning pretrained language models: Weight initializations, data
  orders, and early stopping.
\newblock \emph{arXiv}, 2020.

\bibitem[Dong and Lapata(2016)]{dong2016logical}
Li~Dong and Mirella Lapata.
\newblock Language to logical form with neural attention.
\newblock In \emph{Association for Computational Linguistics (ACL)}, 2016.

\bibitem[Dosovitskiy et~al.(2021)Dosovitskiy, Beyer, Kolesnikov, Weissenborn,
  Zhai, Unterthiner, Dehghani, Minderer, Heigold, Gelly, Uszkoreit, and
  Houlsby]{dosovitskiy2021vit}
Alexey Dosovitskiy, Lucas Beyer, Alexander Kolesnikov, Dirk Weissenborn,
  Xiaohua Zhai, Thomas Unterthiner, Mostafa Dehghani, Matthias Minderer, Georg
  Heigold, Sylvain Gelly, Jakob Uszkoreit, and Neil Houlsby.
\newblock An image is worth 16x16 words: Transformers for image recognition at
  scale.
\newblock In \emph{International Conference on Learning Representations
  (ICLR)}, 2021.

\bibitem[Eldan and Shamir(2016)]{eldan2016depth}
R.~Eldan and O.~Shamir.
\newblock The power of depth for feedforward neural networks.
\newblock In \emph{Conference on Learning Theory (COLT)}, 2016.

\bibitem[Fisch et~al.(2019)Fisch, Talmor, Jia, Seo, Choi, and
  Chen]{fisch2019mrqa}
Adam Fisch, Alon Talmor, Robin Jia, Minjoon Seo, Eunsol Choi, and Danqi Chen.
\newblock {MRQA} 2019 shared task: Evaluating generalization in reading
  comprehension.
\newblock In \emph{Workshop on Machine Reading for Question Answering (MRQA)},
  2019.

\bibitem[French(1999)]{french1999catastrophic}
Robert~M French.
\newblock Catastrophic forgetting in connectionist networks.
\newblock \emph{Trends in cognitive sciences}, 3, 1999.

\bibitem[Github(2021)]{github2021repos}
Github.
\newblock Repository search for public repositories.
\newblock \url{https://github.com/search?q=is:public}, 2021.

\bibitem[Gururangan et~al.(2020)Gururangan, Marasovi{\'c}, Swayamdipta, Lo,
  Beltagy, Downey, and Smith]{gururangan2020don}
Suchin Gururangan, Ana Marasovi{\'c}, Swabha Swayamdipta, Kyle Lo, Iz~Beltagy,
  Doug Downey, and Noah~A Smith.
\newblock Don't stop pretraining: adapt language models to domains and tasks.
\newblock \emph{arXiv preprint arXiv:2004.10964}, 2020.

\bibitem[Guzmán et~al.(2019)Guzmán, Chen, Ott, Pino, Lample, Koehn,
  Chaudhary, and Ranzato]{guzman2019flores}
Francisco Guzmán, Peng-Jen Chen, Myle Ott, Juan Pino, Guillaume Lample,
  Philipp Koehn, Vishrav Chaudhary, and Marc'Aurelio Ranzato.
\newblock Two new evaluation datasets for low-resource machine translation:
  Nepali-{English} and sinhala-{English}.
\newblock \emph{arXiv}, 2019.

\bibitem[Hendrycks and Dietterich(2019)]{hendrycks2019benchmarking}
Dan Hendrycks and Thomas Dietterich.
\newblock Benchmarking neural network robustness to common corruptions and
  perturbations.
\newblock In \emph{International Conference on Learning Representations
  (ICLR)}, 2019.

\bibitem[Houlsby et~al.(2019)Houlsby, Giurgiu, Jastrzebski, Morrone,
  de~Laroussilhe, Gesmundo, Attariyan, and Gelly]{houlsby2019parameter}
Neil Houlsby, Andrei Giurgiu, Stanislaw Jastrzebski, Bruna Morrone, Quentin
  de~Laroussilhe, Andrea Gesmundo, Mona Attariyan, and Sylvain Gelly.
\newblock Parameter-efficient transfer learning for {NLP}.
\newblock \emph{arXiv}, 2019.

\bibitem[Huang et~al.(2015)Huang, Xu, and Yu]{huang2015bidirectional}
Zhiheng Huang, Wei Xu, and Kai Yu.
\newblock Bidirectional {LSTM-CRF} models for sequence tagging.
\newblock \emph{arXiv}, 2015.

\bibitem[Jagannatha and Yu(2016)]{jagannatha2016structured}
Abhyuday Jagannatha and Hong Yu.
\newblock Structured prediction models for {RNN} based sequence labeling in
  clinical text.
\newblock In \emph{Empirical Methods in Natural Language Processing (EMNLP)},
  2016.

\bibitem[Jang et~al.(2017)Jang, Gu, and Poole]{jang2017categorical}
Eric Jang, Shixiang Gu, and Ben Poole.
\newblock Categorical reparameterization with {G}umbel-softmax.
\newblock \emph{arXiv preprint arXiv:1611.01144}, 2017.

\bibitem[Kassel(1995)]{kassel1995comparison}
Robert~H Kassel.
\newblock \emph{A comparison of approaches to on-line handwritten character
  recognition}.
\newblock PhD thesis, Massachusetts Institute of Technology, 1995.

\bibitem[Kirkpatrick et~al.(2017)Kirkpatrick, Pascanu, Rabinowitz, Veness,
  Desjardins, Rusu, Milan, Quan, Ramalho, Grabska-Barwinska,
  et~al.]{kirkpatrick2017overcoming}
J.~Kirkpatrick, R.~Pascanu, N.~Rabinowitz, J.~Veness, G.~Desjardins, A.~A.
  Rusu, K.~Milan, J.~Quan, T.~Ramalho, A.~Grabska-Barwinska, et~al.
\newblock Overcoming catastrophic forgetting in neural networks.
\newblock \emph{Proceedings of the national academy of sciences}, 2017.

\bibitem[Kornblith et~al.(2019)Kornblith, Shlens, and Le]{kornblith2019better}
Simon Kornblith, Jonathon Shlens, and Quoc~V. Le.
\newblock Do better imagenet models transfer better?
\newblock In \emph{Computer Vision and Pattern Recognition (CVPR)}, 2019.

\bibitem[Krishnamurthy et~al.(2017)Krishnamurthy, Dasigi, and
  Gardner]{krishnamurthy2017neural}
Jayant Krishnamurthy, Pradeep Dasigi, and Matt Gardner.
\newblock Neural semantic parsing with type constraints for semi-structured
  tables.
\newblock In \emph{Empirical Methods in Natural Language Processing (EMNLP)},
  2017.

\bibitem[Kudo and Richardson(2018)]{kudo2018sentencepiece}
Taku Kudo and John Richardson.
\newblock Sentencepiece: A simple and language independent subword tokenizer
  and detokenizer for neural text processing.
\newblock In \emph{Empirical Methods in Natural Language Processing (EMNLP)},
  2018.

\bibitem[Kulal et~al.(2019)Kulal, Pasupat, Chandra, Lee, Padon, Aiken, and
  Liang]{kulal2019spoc}
Sumith Kulal, Panupong Pasupat, Kartik Chandra, Mina Lee, Oded Padon, Alex
  Aiken, and Percy Liang.
\newblock Spoc: Search-based pseudocode to code.
\newblock In \emph{Advances in Neural Information Processing Systems
  (NeurIPS)}, 2019.

\bibitem[Lee et~al.(2020)Lee, Cho, , and Kang]{lee2020mixout}
Cheolhyoung Lee, Kyunghyun Cho, , and Wanmo Kang.
\newblock Mixout: Effective regularization to finetune large-scale pretrained
  language models.
\newblock In \emph{International Conference on Learning Representations
  (ICLR)}, 2020.

\bibitem[Lester et~al.(2021)Lester, Al-Rfou, and Constant]{lester2021power}
Brian Lester, Rami Al-Rfou, and Noah Constant.
\newblock The power of scale for parameter-efficient prompt tuning.
\newblock \emph{arXiv preprint arXiv:2104.08691}, 2021.

\bibitem[Lewis et~al.(2020)Lewis, Liu, Goyal, Ghazvininejad, Mohamed, Levy,
  Stoyanov, and Zettlemoyer]{lewis2020bart}
Mike Lewis, Yinhan Liu, Naman Goyal, Marjan Ghazvininejad, Abdelrahman Mohamed,
  Omer Levy, Ves Stoyanov, and Luke Zettlemoyer.
\newblock Bart: Denoising sequence-to-sequence pre-training for natural
  language generation, translation, and comprehension.
\newblock In \emph{Association for Computational Linguistics (ACL)}, 2020.

\bibitem[Li and Liang(2021)]{li2021prefix}
Xiang~Lisa Li and Percy Liang.
\newblock Prefix-tuning: Optimizing continuous prompts for generation.
\newblock In \emph{Association for Computational Linguistics (ACL)}, 2021.

\bibitem[Lipton et~al.(2018)Lipton, Wang, and Smola]{lipton2018labelshift}
Z.~C. Lipton, Y.~Wang, and A.~J. Smola.
\newblock Detecting and correcting for label shift with black box predictors.
\newblock In \emph{International Conference on Machine Learning (ICML)}, 2018.

\bibitem[Maddison et~al.(2016)Maddison, Mnih, and Teh]{maddison2016concrete}
Chris~J Maddison, Andriy Mnih, and Yee~Whye Teh.
\newblock The concrete distribution: A continuous relaxation of discrete random
  variables.
\newblock \emph{arXiv preprint arXiv:1611.00712}, 2016.

\bibitem[McCloskey and Cohen(1989)]{mccloskey1989catastrphic}
Michael McCloskey and Neal~J Cohen.
\newblock Catastrophic interference in connectionist networks: The sequential
  learning problem.
\newblock \emph{The psychology of learning and motivation}, 24, 1989.

\bibitem[Miyato et~al.(2018)Miyato, ichi Maeda, Ishii, and
  Koyama]{miyato2018virtual}
Takeru Miyato, Shin ichi Maeda, Shin Ishii, and Masanori Koyama.
\newblock Virtual adversarial training: a regularization method for supervised
  and semi-supervised learning.
\newblock \emph{IEEE Transactions on Pattern Analysis and Machine
  Intelligence}, 2018.

\bibitem[Miyazaki et~al.(2020)Miyazaki, Tsuchiya, Sugaya, Omachi, Iwamura,
  Uchida, and Kise]{miyazaki2020automatic}
Tomo Miyazaki, Tatsunori Tsuchiya, Yoshihiro Sugaya, Shinichiro Omachi,
  Masakazu Iwamura, Seiichi Uchida, and Koichi Kise.
\newblock Automatic generation of typographic font from a small font subset.
\newblock \emph{IEEE Computer Graphics and Applications}, 40, 2020.

\bibitem[Mostajabi et~al.(2018)Mostajabi, Maire, and
  Shakhnarovich]{mostajabi2018regularizing}
Mohammadreza Mostajabi, Michael Maire, and Gregory Shakhnarovich.
\newblock Regularizing deep networks by modeling and predicting label
  structure.
\newblock In \emph{Computer Vision and Pattern Recognition (CVPR)}, 2018.

\bibitem[Mueller(2013)]{mueller2013semantic}
Andreas Mueller.
\newblock \emph{Methods for Learning Structured Prediction in Semantic
  Segmentation of Natural Images}.
\newblock PhD thesis, University of Bonn, 2013.

\bibitem[Méndez-Lucio et~al.(2020)Méndez-Lucio, Baillif, Clevert, Rouquié,
  and Wichard]{lucio2020molecule}
Oscar Méndez-Lucio, Benoit Baillif, Djork-Arné Clevert, David Rouquié, and
  Joerg Wichard.
\newblock De novo generation of hit-like molecules from gene expression
  signatures using artificial intelligence.
\newblock \emph{Nature Communications}, 11, 2020.

\bibitem[Neyshabur et~al.(2014)Neyshabur, Tomioka, and
  Srebro]{neyshabur2014implicit}
Behnam Neyshabur, Ryota Tomioka, and Nathan Srebro.
\newblock In search of the real inductive bias: On the role of implicit
  regularization in deep learning.
\newblock \emph{arXiv}, 2014.

\bibitem[Neyshabur et~al.(2017)Neyshabur, Bhojanapalli, McAllester, and
  Srebro]{neyshabur2017generalization}
Behnam Neyshabur, Srinadh Bhojanapalli, David McAllester, and Nathan Srebro.
\newblock Exploring generalization in deep learning.
\newblock In \emph{Advances in Neural Information Processing Systems
  (NeurIPS)}, 2017.

\bibitem[Parisotto et~al.(2015)Parisotto, Ba, and
  Salakhutdinov]{parisotto2015actor}
Emilio Parisotto, Jimmy~Lei Ba, and Ruslan Salakhutdinov.
\newblock Actor-mimic: Deep multitask and transfer reinforcement learning.
\newblock \emph{arXiv preprint arXiv:1511.06342}, 2015.

\bibitem[Radford et~al.(2021)Radford, Kim, Hallacy, Ramesh, Goh, Agarwal,
  Sastry, Askell, Mishkin, Clark, Krueger, and Sutskever]{radford2021clip}
Alec Radford, Jong~Wook Kim, Chris Hallacy, Aditya Ramesh, Gabriel Goh,
  Sandhini Agarwal, Girish Sastry, Amanda Askell, Pamela Mishkin, Jack Clark,
  Gretchen Krueger, and Ilya Sutskever.
\newblock Learning transferable visual models from natural language
  supervision.
\newblock In \emph{International Conference on Machine Learning (ICML)}, volume
  139, pages 8748--8763, 2021.

\bibitem[Raffel et~al.(2019)Raffel, Shazeer, Roberts, Lee, Narang, Matena,
  Zhou, Li, and Liu]{raffel2019exploring}
Colin Raffel, Noam Shazeer, Adam Roberts, Katherine Lee, Sharan Narang, Michael
  Matena, Yanqi Zhou, Wei Li, and Peter~J. Liu.
\newblock Exploring the limits of transfer learning with a unified text-to-text
  transformer.
\newblock \emph{arXiv preprint arXiv:1910.10683}, 2019.

\bibitem[Ramachandran et~al.(2018)Ramachandran, Liu, and
  Le]{ramachandran2018pretraining}
P.~Ramachandran, P.~J. Liu, and Q.~V. Le.
\newblock Unsupervised pretraining for sequence to sequence learning.
\newblock \emph{arXiv}, 2018.

\bibitem[Ratcliff(1990)]{ratcliff1990connectionist}
Roger Ratcliff.
\newblock Connectionist models of recognition memory: constraints imposed by
  learning and forgetting functions.
\newblock \emph{Psychological review}, 97, 1990.

\bibitem[Ronneberger et~al.(2015)Ronneberger, Fischer, and
  Brox]{ronneberger2015unet}
Olaf Ronneberger, Philipp Fischer, and Thomas Brox.
\newblock {U-Net}: Convolutional networks for biomedical image segmentation.
\newblock \emph{arXiv}, 2015.

\bibitem[Rusu et~al.(2015)Rusu, Colmenarejo, Gulcehre, Desjardins, Kirkpatrick,
  Pascanu, Mnih, Kavukcuoglu, and Hadsell]{rusu2015policy}
Andrei~A Rusu, Sergio~Gomez Colmenarejo, Caglar Gulcehre, Guillaume Desjardins,
  James Kirkpatrick, Razvan Pascanu, Volodymyr Mnih, Koray Kavukcuoglu, and
  Raia Hadsell.
\newblock Policy distillation.
\newblock \emph{arXiv preprint arXiv:1511.06295}, 2015.

\bibitem[Savarese et~al.(2019)Savarese, Evron, Soudry, and
  Srebro]{savarese2019function}
Pedro Savarese, Itay Evron, Daniel Soudry, and Nathan Srebro.
\newblock How do infinite width bounded norm networks look in function space?
\newblock \emph{arXiv}, 2019.

\bibitem[Senior et~al.(2020)Senior, Evans, Jumper, Kirkpatrick, Sifre, Green,
  Qin, Žídek, Nelson, Bridgland, Penedones, Petersen, Simonyan, Crossan,
  Kohli, Jones, Silver, Kavukcuoglu, and Hassabis]{senior2020protein}
Andrew~W. Senior, Richard Evans, John Jumper, James Kirkpatrick, Laurent Sifre,
  Tim Green, Chongli Qin, Augustin Žídek, Alexander W.~R. Nelson, Alex
  Bridgland, Hugo Penedones, Stig Petersen, Karen Simonyan, Steve Crossan,
  Pushmeet Kohli, David~T. Jones, David Silver, Koray Kavukcuoglu, and Demis
  Hassabis.
\newblock Improved protein structure prediction using potentials from deep
  learning.
\newblock \emph{Nature}, 577, 2020.

\bibitem[Sennrich et~al.(2016{\natexlab{a}})Sennrich, Haddow, and
  Birch]{sennrich2016monolingual}
R.~Sennrich, B.~Haddow, and A.~Birch.
\newblock Improving neural machine translation models with monolingual data.
\newblock \emph{arXiv}, 2016{\natexlab{a}}.

\bibitem[Sennrich et~al.(2016{\natexlab{b}})Sennrich, Haddow, and
  Birch]{sennrich2016bpe}
Rico Sennrich, Barry Haddow, and Alexandra Birch.
\newblock Neural machine translation of rare words with subword units.
\newblock In \emph{Association for Computational Linguistics (ACL)},
  2016{\natexlab{b}}.

\bibitem[Shu et~al.(2018)Shu, Bui, Narui, and Ermon]{shu2018dirtt}
Rui Shu, Hung~H. Bui, Hirokazu Narui, and Stefano Ermon.
\newblock A {DIRT}-{T} approach to unsupervised domain adaptation.
\newblock In \emph{International Conference on Learning Representations
  (ICLR)}, 2018.

\bibitem[Skorokhodov et~al.(2018)Skorokhodov, Rykachevskiy, Emelyanenko,
  Slotin, and Ponkratov]{skorokhodov2018ssl}
I.~Skorokhodov, A.~Rykachevskiy, D.~Emelyanenko, S.~Slotin, and A.~Ponkratov.
\newblock Semi-supervised neural machine translation with language models.
\newblock In \emph{AMTA 2018 Workshop on Technologies for MT of Low Resource
  Languages (LoResMT)}, 2018.

\bibitem[Sutskever et~al.(2014)Sutskever, Vinyals, and
  Le]{sutskever2014sequence}
Ilya Sutskever, Oriol Vinyals, and Quoc~V. Le.
\newblock Sequence to sequence learning with neural networks.
\newblock In \emph{Advances in Neural Information Processing Systems
  (NeurIPS)}, pages 3104--3112, 2014.

\bibitem[Tarvainen and Valpola(2017)]{tarvainen2017mean}
Antti Tarvainen and Harri Valpola.
\newblock Mean teachers are better role models: Weight-averaged consistency
  targets improve semi-supervised deep learning results.
\newblock In \emph{Advances in neural information processing systems}, pages
  1195--1204, 2017.

\bibitem[Tenenbaum and Freeman(2000)]{tenenbaum2000separating}
Joshua~B. Tenenbaum and William~T. Freeman.
\newblock Separating style and content with bilinear models.
\newblock \emph{Neural Computation}, 12, 2000.

\bibitem[Tu and Gimpel(2019)]{tu2019benchmarking}
Lifu Tu and Kevin Gimpel.
\newblock Benchmarking approximate inference methods for neural structured
  prediction.
\newblock In \emph{North American Association for Computational Linguistics
  (NAACL)}, 2019.

\bibitem[Vaswani et~al.(2017)Vaswani, Shazeer, Parmar, Uszkoreit, Jones, Gomez,
  Kaiser, and Polosukhin]{vaswani2017attention}
Ashish Vaswani, Noam Shazeer, Niki Parmar, Jakob Uszkoreit, Llion Jones,
  Aidan~N Gomez, Lukasz Kaiser, and Illia Polosukhin.
\newblock Attention is all you need.
\newblock \emph{arXiv preprint arXiv:1706.03762}, 2017.

\bibitem[Vincent et~al.(2008)Vincent, Larochelle, Bengio, , and
  Manzagol]{vincent2008denoise}
Pascal Vincent, Hugo Larochelle, Yoshua Bengio, , and Pierre-Antoine Manzagol.
\newblock Extracting and composing robust features with denoising autoencoders.
\newblock In \emph{International Conference on Machine Learning (ICML)}, 2008.

\bibitem[Vincent et~al.(2010)Vincent, Larochelle, Lajoie, Bengio, and
  Manzagol]{vincent2010stacked}
Pascal Vincent, Hugo Larochelle, Isabelle Lajoie, Yoshua Bengio, and
  Pierre-Antoine Manzagol.
\newblock Stacked denoising autoencoders: Learning useful representations in a
  deep network with a local denoising criterion.
\newblock \emph{Journal of Machine Learning Research (JMLR)}, 11, 2010.

\bibitem[Wei and Ma(2019)]{wei2019data}
Colin Wei and Tengyu Ma.
\newblock Data-dependent sample complexity of deep neural networks via
  {Lipschitz} augmentation.
\newblock In \emph{Advances in Neural Information Processing Systems
  (NeurIPS)}, 2019.

\bibitem[Wei and Ma(2020)]{wei2020improved}
Colin Wei and Tengyu Ma.
\newblock Improved sample complexities for deep networks and robust
  classification via an all-layer margin.
\newblock In \emph{International Conference on Learning Representations
  (ICLR)}, 2020.

\bibitem[Williams(1992)]{williams1992simple}
Ronald~J Williams.
\newblock Simple statistical gradient-following algorithms for connectionist
  reinforcement learning.
\newblock \emph{Machine learning}, 8\penalty0 (3):\penalty0 229--256, 1992.

\bibitem[Xiao et~al.(2016)Xiao, Dymetman, and Gardent]{xiao2016sequence}
Chunyang Xiao, Marc Dymetman, and Claire Gardent.
\newblock Sequence-based structured prediction for semantic parsing.
\newblock In \emph{Association for Computational Linguistics (ACL)}, 2016.

\bibitem[Yasunaga and Liang(2020)]{yasunaga2020repair}
Michihiro Yasunaga and Percy Liang.
\newblock Graph-based, self-supervised program repair from diagnostic feedback.
\newblock In \emph{International Conference on Machine Learning (ICML)}, 2020.

\bibitem[Yin and Neubig(2017)]{yin2017syntactic}
Pengcheng Yin and Graham Neubig.
\newblock A syntactic neural model for general-purpose code generation.
\newblock In \emph{Association for Computational Linguistics (ACL)}, pages
  440--450, 2017.

\bibitem[Yogatama et~al.(2019)Yogatama, de~Masson~d'Autume, Connor, Kocisky,
  Chrzanowski, Kong, Lazaridou, Ling, Yu, Dyer, et~al.]{yogatama2019learning}
Dani Yogatama, Cyprien de~Masson~d'Autume, Jerome Connor, Tomas Kocisky, Mike
  Chrzanowski, Lingpeng Kong, Angeliki Lazaridou, Wang Ling, Lei Yu, Chris
  Dyer, et~al.
\newblock Learning and evaluating general linguistic intelligence.
\newblock \emph{arXiv preprint arXiv:1901.11373}, 2019.

\bibitem[Yosinski et~al.(2014)Yosinski, Clune, Bengio, and
  Lipson]{yosinski2014transferable}
Jason Yosinski, Jeff Clune, Yoshua Bengio, and Hod Lipson.
\newblock How transferable are features in deep neural networks?
\newblock In \emph{Advances in Neural Information Processing Systems
  (NeurIPS)}, 2014.

\bibitem[Zhang et~al.(2021)Zhang, Wu, Katiyar, Weinberger, and
  Artzi]{zhang2021revisiting}
Tianyi Zhang, Felix Wu, Arzuo Katiyar, Kilian~Q. Weinberger, and Yoav Artzi.
\newblock Revisiting few-sample {BERT} fine-tuning.
\newblock In \emph{International Conference on Learning Representations
  (ICLR)}, 2021.

\bibitem[Zhang and Wu(2013)]{zhang2013denoising}
Xiao-Lei Zhang and Ji~Wu.
\newblock Denoising deep neural networks based voice activity detection.
\newblock \emph{arXiv}, 2013.

\end{thebibliography}

\newpage
\appendix
\onecolumn
\section{Proof of Theorem~\ref{thm:main}}
\label{app:analysis}
We recall the setting in Section~\ref{sec:theory}, which compares the norm of 2-layer ReLU networks used to represent a family of piecewise constant functions directly versus in a composed manner.
The input space $\sX \subseteq \R$ is one-dimensional and the valid output space $\sV \subset \sR^k$ is a set of discrete points in a $k$-dimensional space. We first show a result for $k=1$, then extend to higher $k$.

Suppose that the input and ambient output space are 1-dimensional ($\sX\subset \R,\sY=\R$) and we use a model $\fthetabase:\sX\rightarrow \sY$ from the family of bounded-norm two-layer ReLU neural networks $\sH$.
The valid output space $\sV$ is a discrete set of points in $\R$, and the denoiser $\Pi: \R \rightarrow \sV$ maps from real values to the nearest point in the valid set (breaking ties arbitrarily).
We show that under certain conditions on the target function $\fstar:\sX \rightarrow \sV$, we can use a small-norm base predictor composed with a denoiser ($\Pi \circ \fthetabase$) to represent $\fstar$, while directly representing $\fstar$ (without $\Pi$, as in standard fine-tuning) requires a large norm.

\paragraph{Target function family.}
The target function $\fstar$ is defined on a union of disjoint intervals $\sX = \uplus_{i=0}^{N-1} [x^l_i, x^u_i]$ where the subscript index orders the intervals such that $x^l_{i+1} - x^u_i = \delta > 0$. Thus, there is a gap $\delta$ between any two intervals. We assume that all interval lengths $x^u_i - x^l_i$ are at least $\Delta_x$.
Since $\sV$ is a discrete set of points, $\fstar$ is a piecewise constant function that takes a value $y^*_0,\dots, y^*_{N-1} \in \sV$ in each of the $N$ intervals. Each interval has a distinct value, such that $y^*_i \neq y^*_{i+1}$ for any $0\leq i\leq N-2$.
We will slightly abuse notation throughout by referring to the index $i$ of an interval as ``interval $i$''.

\paragraph{Hypothesis class and norm.}
Following~\citet{savarese2019function}, we define the hypothesis class $\sH$ such that $\ftheta \in \sH$ are
\begin{align}
    \ftheta(x) = \sum_{l=1}^h w_l^{(2)} \left[\langle w_l^{(1)}, x\rangle + b_l^{(1)} \right]_{+} + b_l^{(2)}
\end{align}
over $x\in \R^d$, where we will take $d=1$ throughout. Here, $w_l^{(1)} \in \R^d$ are rows of $W^{(1)} \in \R^{h\times d}$ and $b_l^{(1)}, b_l^{(2)}, w_l^{(2)}\in\R$ are elements of $b^{(1)}, b^{(2)}, w^{(2)}\in \R^h$ respectively. The parameter space for this hypothesis class is
\begin{align}
    \theta\in \Theta = \{(h, W^{(1)}, b^{(1)}, w^{(2)}, b^{(2)}) : h\in \mathbb{N}, W^{(1)}\in\R^{h\times d}, w^{(2)},b^{(1)},b^{(2)} \in \R^h\},
\end{align}
where the number of hidden units $h$ can be unbounded. Note that since our function family is a piecewise function with a finite number of segments, a 2-layer ReLU network with a finite number of hidden units can exactly implement the function.
Each network is associated with the squared Euclidean norm of the weights
\begin{align*}
    C(\theta) = \frac{1}{2}(\|w^{(2)}\|^2_2 + \|W^{(1)}\|^2_F).
\end{align*}
The norm associated with $f\in\sH$ is the minimum norm required to implement a given $f$:
\begin{align}
    \|f\| = \inf_{\theta \in \Theta} C(\theta) \text{~s.t.~} f_{\theta} = f.
\end{align}
Our one-dimensional result relies on the result of~\citet{savarese2019function} (Theorem 3.1) that for 2-layer ReLU networks, the norm can be rewritten as
\begin{align}
\|f\| = \frac{1}{2}\max\left(\int_{-\infty}^\infty |f''(x)|^2dx, |f'(-\infty) + f'(+\infty)|\right).
\end{align}
As a corollary, for one-dimensional functions, the minimum norm interpolant in $\sH$ has equivalent norm to the norm of a linear spline interpolation of the points~\cite{savarese2019function}.

\subsection{Lower bound on $\|\fstd\|$}
We use the norm equivalence between 2-layer ReLU networks and linear splines in one dimension to compare the norm of functions from $\sH$ that represent $\fstar$ with and without $\Pi$.
\begin{lemma}
    \label{thm:normlowerbound}
    For piecewise constant target function $\fstar$ defined by endpoints at the points $(x^l_0, y^*_0), (x^u_0, y^*_0), \dots, (x^l_{N-1}, y^*_{N-1}), (x^u_{N-1}, y^*_{N-1})$, any $\fstd \in \sH$ has norm $\|\fstd\| \geq \sum_{i=0}^{N-2} \frac{|y^*_{i+1} - y^*_{i}|}{\delta}$.
\end{lemma}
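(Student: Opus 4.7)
The plan is to invoke the corollary stated in the excerpt (following Savarese et al.) that the minimum norm required in $\sH$ to interpolate a given set of points in one dimension equals the norm of the corresponding linear-spline interpolant. For a linear spline, this norm reduces (up to the factor of $1/2$) to the sum of absolute slope changes at the knots, i.e., the total variation of the derivative. Since $\fstd$ must equal $\fstar$ on all of $\sX$, it is forced to take the constant value $y^*_i$ throughout the interval $[x^l_i, x^u_i]$ for each $i$. Consequently $(\fstd)'$ vanishes on the interior of each such interval, and all of the total variation of $(\fstd)'$ is concentrated in the $N-1$ gaps $(x^u_i, x^l_{i+1})$, each of length exactly $\delta$.

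The main step is a per-gap lower bound on total variation. Fix a gap and write $s = (\fstd)'$ restricted to a neighborhood of $[x^u_i, x^l_{i+1}]$. Three constraints pin down $s$: it approaches $0$ from the left as one leaves the $i$-th constant interval; it approaches $0$ from the right as one enters the $(i+1)$-th constant interval; and $\int_{x^u_i}^{x^l_{i+1}} s(x)\,dx = y^*_{i+1} - y^*_i$, accounting for the change in function value across the gap. By averaging, $\sup_{x\in[x^u_i, x^l_{i+1}]} |s(x)| \geq |y^*_{i+1} - y^*_i|/\delta$. Since $s$ makes a round trip between $0$ and a value of magnitude at least $|y^*_{i+1} - y^*_i|/\delta$, its total variation on this gap (including any jump discontinuities at $x^u_i$ and $x^l_{i+1}$, which generically occur because $s$ jumps from $0$ on the adjacent constant interval to a nonzero value on the gap) is at least $2|y^*_{i+1} - y^*_i|/\delta$.

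Summing across all $N-1$ gaps and combining with the factor of $1/2$ in the norm gives
\begin{align*}
\|\fstd\| \;\geq\; \frac{1}{2}\sum_{i=0}^{N-2}\frac{2|y^*_{i+1} - y^*_{i}|}{\delta} \;=\; \sum_{i=0}^{N-2}\frac{|y^*_{i+1} - y^*_{i}|}{\delta},
\end{align*}
which is the claimed bound. Any boundary-term contributions to the norm (from behavior at $\pm\infty$) can only strengthen this bound, so they can be dropped.

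The main obstacle, though essentially elementary, is making the per-gap total-variation accounting rigorous at the boundary jumps: since the derivative of a 2-layer ReLU network is piecewise constant, $(\fstd)''$ is a discrete sum of Dirac masses at knots, and one must verify that the slope changes at the gap boundaries (from $0$ on the constant piece to some nonzero value on the gap) are properly counted as atoms in this measure. With this accounting in place, the round-trip argument delivering the $2|y^*_{i+1} - y^*_i|/\delta$ bound per gap follows directly from the averaging step above.
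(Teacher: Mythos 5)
Your proof is correct and takes essentially the same route as the paper's: both arguments come down to the observation that over each length-$\delta$ gap the derivative must reach magnitude at least $|y^*_{i+1}-y^*_i|/\delta$ (by averaging) and return to zero on the adjacent constant pieces, so each gap contributes $2|y^*_{i+1}-y^*_i|/\delta$ to the total variation of the derivative, which the factor of $1/2$ in the norm then cancels. The only difference is packaging --- the paper computes the norm of the linear-spline interpolant of the $2N$ endpoints and invokes the result that this lower-bounds any interpolant in $\sH$, whereas you bound the total variation of $(\fstd)'$ directly --- and your concern about counting the jumps at the gap boundaries is harmless, since the round-trip bound only needs the derivative to vanish somewhere in the interior of each constant interval.
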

\begin{proof}
    By Theorem 3.3 in~\citet{savarese2019function}, the norm of a linear spline interpolation lower bounds $\|\fstd\|$ for any $\fstd \in \sH$. We define a linear spline interpolation $f$ of the $2N$ points as
    \begin{align}
        f(x) = \begin{cases}
            y^*_i + \gamma_{2i} (x - x^l_i) & x^l_i \leq x \leq x^u_i,~0\leq i \leq N-1\\
            y^*_i + \gamma_{2i+1} (x - x^u_i) & x^u_i \leq x \leq x^l_{i+1},~0\leq i \leq N-2\\
        \end{cases}
    \end{align}
    where $\gamma_{2i+1} = \frac{y^*_{i+1} - y^*_i}{\delta}$ for $0\leq i \leq N-2$ and $\gamma_{2i}=0$ for $0\leq i \leq N-1$.
    From~\citet{savarese2019function}, we have that $\|f\| = \max(\sum_{j=0}^{2N-3}|\gamma_{j+1} - \gamma_j|, |\gamma_0 + \gamma_{N-2}|)$, which we lower bound with the first term.
    There are $N-1$ nonzero slopes $\gamma_j$ and each one contributes to the norm twice.
    Thus $\|\fstd\| \geq \|f\| = \frac{1}{2} (2\sum_{i=0}^{N-2} \frac{|y^*_{i+1} - y^*_{i}|}{\delta})$.
\end{proof}

\subsection{Upper bound on $\|\fthetabasemin\|$}

We compute an upper bound on the norm of a min-norm base predictor $\fthetabasemin$ by construction.
Consider learning a function $\fthetabase \in \sH$ where we make predictions as $\Pi$ composed with $\fthetabase$.
For every value $y^*_i\in \sV$, let $(y^l_i, y^u_i]$ be the interval of values in $\R$ closest to $y^*_i$. Thus, if $y\in (y^l_i, y^u_i]$ then $\Pi$ maps $y$ to $y^*_i$.
Without loss of generality, we have assumed that $\Pi$ breaks ties such that $y^l_i$ does not map to $y^*_i$. 

\paragraph{Adjacent intervals.} Define interval $i$ to be \emph{adjacent} to $i+1$ if it satisfies either $y^u_i = y^l_{i+1}$ or $y^l_i = y^u{i+1}$, or equivalently, there is no target value $y^*_j \in \sV$ in the interval $(y^*_i, y^*_{i+1})$.
Considering the $i$-th pair of intervals to be $(i, i+1)$, let $I$ be the index set of adjacent pairs of intervals in $\fstar$ and $J$ the index set of non-adjacent pairs, where $|I|+|J|=N-1$.
Assume $\min_i|y^*_{i+1}-y^*_i| \geq L$, $\max_i|y^*_{i+1}-y^*_i|\leq U$ are the min and max separations between valid points.
\begin{lemma}
    \label{thm:upper-bound}
The norm of the minimum-norm base predictor $\|\fthetabasemin\|$ is upper bounded as
\begin{align}
    \|\fthetabasemin\| &\leq \max\left(\frac{|J|U}{\delta} + \frac{|I|U}{\Delta_x}, \frac{U}{\Delta_x}\right).
\end{align}
\end{lemma}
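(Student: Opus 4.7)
The plan is to upper bound $\|\fthetabasemin\|$ by exhibiting an explicit feasible base predictor $\fthetabase \in \sH$ (i.e., one with $\Pi\circ\fthetabase(x)=\fstar(x)$ for all $x\in\sX$) and then bounding $\|\fthetabase\|$ via the univariate norm identity used in Lemma~\ref{thm:normlowerbound}, namely $\|f\|=\tfrac12\max\bigl(\int|f''|\,dx,\ |f'(-\infty)+f'(+\infty)|\bigr)$. Any feasible construction suffices because $\fthetabasemin$ is the minimum-norm feasible predictor.

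The key observation is that within each interval $[x^l_i,x^u_i]$, $\fthetabase$ must land in the basin $(y^l_i,y^u_i]$ so that $\Pi$ sends it to $y^*_i$, whereas in the length-$\delta$ gaps it is unconstrained. This motivates two transition strategies. For a non-adjacent pair $(i,i+1)\in J$, I would set $\fthetabase\equiv y^*_i$ on interval $i$ and linearly interpolate across the gap to $y^*_{i+1}$, producing a gap slope of magnitude $|y^*_{i+1}-y^*_i|/\delta\leq U/\delta$. For an adjacent pair $(i,i+1)\in I$ with shared basin boundary $m$, I would instead absorb the travel inside the intervals: go linearly on interval $i$ from $y^*_i$ to just on the $y^*_i$-side of $m$, cross $m$ in the gap with arbitrarily small slope, and go linearly on interval $i+1$ from just on the $y^*_{i+1}$-side of $m$ to $y^*_{i+1}$. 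Since each interval has length $\geq\Delta_x$ and each such traversal covers at most a half-basin ($\leq U/2$), these interval slopes have magnitude at most $U/(2\Delta_x)$. Outside $\sX$, extend $\fthetabase$ as a constant so that $f'(\pm\infty)=0$.

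Feasibility is immediate by construction, and only $\int|f''|$ (the total variation of $f'$) contributes to the norm; this is at most twice the sum of absolute slope magnitudes across all linear segments. Non-adjacent gaps contribute $|J|\cdot U/\delta$ in total. For interval slopes, introduce $a_i,b_i\in\{0,1\}$ indicating whether $(i-1,i)$ and $(i,i+1)$ are adjacent; a short case analysis over the orderings of $y^*_{i-1},y^*_i,y^*_{i+1}$ shows the slope inside interval $i$ has magnitude at most $(a_i+b_i)\cdot U/(2\Delta_x)$, tight in the both-sided-adjacent opposite-direction case (basin width $\leq U$ traversed over length $\Delta_x$) and loose in the same-direction case (where the function can stay near a common boundary). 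Summing using $\sum_i a_i=\sum_i b_i=|I|$ gives total interval-slope magnitude $|I|\cdot U/\Delta_x$, and combining with a factor of two inside $\int|f''|$ and then the outer $\tfrac12$ yields $\|\fthetabase\|\leq|J|U/\delta+|I|U/\Delta_x$. The extra $U/\Delta_x$ in the lemma's outer $\max$ handles degenerate configurations such as $N=1$, where the first term vanishes.

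The main obstacle I anticipate is the bookkeeping when several adjacent pairs chain together, because then a single interval carries slope attributable to both of its neighboring pairs; the uniform $(a_i+b_i)$-based accounting avoids double counting cleanly but must be verified case by case, using the fact that certain sign patterns (e.g., $y^*_{i-1}$ and $y^*_{i+1}$ both strictly between $y^*_i$ and a common side with $y^*_{i-1}\ne y^*_{i+1}$) are precluded by the adjacency hypothesis itself. A secondary detail is ensuring that the $\epsilon$-offsets used in the ``just on the $y^*_i$-side of $m$'' construction are chosen small enough that $\fthetabase$ remains strictly inside the correct basin throughout each interval, which follows by continuity and can be made uniform in $i$.
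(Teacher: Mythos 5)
Your proposal is correct and takes essentially the same route as the paper: an explicit piecewise-linear (linear-spline) construction that absorbs adjacent transitions inside the length-$\geq\Delta_x$ intervals, routes non-adjacent transitions through the length-$\delta$ gaps, and then bounds the Savarese-style spline norm by summing slope magnitudes, exactly as in the paper's construction of $\fhat$. One minor slip that does not affect the bound: in the both-sides-adjacent case it is the \emph{same-direction} configuration that forces a full-basin traversal and makes $(a_i+b_i)U/(2\Delta_x)$ tight, while the direction-change configuration lets the function hug a single basin boundary with near-zero slope --- you have these two swapped.
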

\begin{proof}
We give an explicit construction $\fhat$ in the univariate setting where the norm of $\fhat$ upper bounds $\|\fthetabasemin\|$.
We define the construction $\fhat$ via a set of points $(x_0, y_0), \dots, (x_{2N-1}, y_{2N-1})$ to linearly interpolate. For interval $1\leq i \leq N-2$, we have two different cases describing the interval's relation with its previous and next intervals:
\begin{enumerate}
    \item Same direction: if $y^*_{i-1} < y^*_i < y^*_{i+1}$, set $(x_{2i}, y_{2i})=(x^l_{2i}, y^l_{2i})$ and $(x_{2i+1}, y_{2i+1})=(x^u_i, y^u_i)$. If $y^*_{i-1} > y^*_i > y^*_{i+1}$, set $(x_{2i}, y_{2i})=(x^l_{2i}, y^u_{2i})$ and $(x_{2i+1}, y_{2i+1})=(x^u_i, y^l_i)$.
    \item Change direction: if $y^*_{i-1} < y^*_i > y^*_{i+1}$, set $(x_{2i}, y_{2i})=(x^l_{2i}, y^l_{2i}+\epsilon)$ and $(x_{2i+1}, y_{2i+1})=(x^u_i, y^l_i+\epsilon)$. If $y^*_{i-1} > y^*_i < y^*_{i+1}$, set $(x_{2i}, y_{2i})=(x^l_{2i}, y^u_{2i})$ and $(x_{2i+1}, y_{2i+1})=(x^u_i, y^u_i)$.
\end{enumerate}
We will choose $\epsilon>0$ to be a small, arbitrary value.
For the beginning and end intervals $i\in \{0, N-1\}$, we choose $(x_0, y_0),(x_{2N-2}, y_{2N-2})$ to minimize the norm of the linear spline interpolation given the other points.

We change the construction for adjacent intervals as follows:
\begin{enumerate}
    \item Adjacent to previous interval ($i > 0$): If interval $i-1$ is adjacent to $i$, we change the construction such that $x_{2i} = x^l_i - \delta/2$.
    \item Adjacent to next interval ($i < N-1$): If interval $i$ is adjacent to $i+1$, then $x_{2i+1} = x^u_i + \delta/2$ (unless case 3 occurs). If $0 < i < N-1$ and $y^*_{i-1} < y^*_i > y^*_{i+1}$, then we also set $y_{2i+1} = y^l_i$ (instead of $y^l_i + \epsilon$).
    \item Adjacent to both previous and next intervals ($0 < i < N-1$): If $(i-1, i),(i,i+1)$ are adjacent and $y^*_{i-1} < y^*_i > y^*_{i+1}$, set $x_{2i+1} = (x^u_i - x^l_i) / 2$ and $y_{2i+1} = y^l_i + \epsilon$.
\end{enumerate}
The number of non-adjacent pairs of intervals in $\fstar$ determines the complexity gap between $\|\fthetabasemin\|$ and $\|\fstd\|$.

Let $\fhat$ be the linear spline interpolation of the points $(x_0, y_0), \dots, (x_{2N-1}, y_{2N-1})$ as above, where $\Pi \circ \fhat(x) = \fstar(x)$ for $x\in \sX$ by construction. As a feasible solution, $\|\fhat\| \geq \|\fthetabasemin\|$.
We distinguish between interval segments with endpoints $(x_{2i}, y_{2i}), (x_{2i+1}, y_{2i+1})$ and interconnecting segments with endpoints $(x_{2i+1}, y_{2i+1}), (x_{2(i+1)}, y_{2(i+1)})$, for $0 \leq i \leq N-2$.
For any $i$, let $\hatalpha_{2i}$ be the slope of the interval segment and $\hatalpha_{2i+1}$ be the slope of the interconnecting segment.
For some interconnecting segments in the construction, the segment is of length zero. For these interconnecting segments, we define $\hatalpha_{2i+1} = \hatalpha_{2i}$ which does not affect the norm calculation.
The norm of the construction is
\begin{align*}
    \|\fhat\| = \frac{1}{2}\max(\sum_{i=0}^{N-2} |\hatalpha_{2i+1} - \hatalpha_{2i}| + |\hatalpha_{2(i+1)} - \hatalpha_{2i+1}|, |\hatalpha_0 + \hatalpha_{N-2}|).
\end{align*}
Notice that both differences in the first term involve an interconnecting segment.

We first bound the first term in the norm.
Suppose $(i, i+1) \in J$ is an non-adjacent pair. The contribution to the norm is
\begin{align*}
    |\hatalpha_{2i+1} - \hatalpha_{2i}| + |\hatalpha_{2(i+1)} - \hatalpha_{2i+1}| & \leq 2|\hatalpha_{2i+1}|\\
                                                                                  &\leq 2\frac{|y^*_{i+1} - y^*_i|}{\delta} \leq \frac{2U}{\delta}
\end{align*}
where in the first inequality, we note that the worst-case difference in slopes in our construction is when $\hatalpha_{2i}=0$ and $\hatalpha_{2(i+1)}=0$. The second step follows from $|\hatalpha_{2i+1}| \leq \frac{\min(|y^u_i - y^l_{i+1}|, |y^l_i - y^u_{i+1}| ) + \epsilon}{\delta}$ which is upper bounded by the second inequality for small enough $\epsilon$.
For purposes of the bound, we let $y^u_j = y^l_j + U$ for $j=\argmax_i y^*_i$ and $y^l_k = y^u_k - U$ for $k=\argmin_i y^*_i$. We can do this since the construction always `'changes direction'' with slope going to 0 as $\epsilon \rightarrow 0$ for the extremal-valued intervals.

Suppose $(i, i+1) \in I$ is an adjacent pair. Let $A$ be the event that $0 < i < N-1$ and $y^*_{i-1} < y^*_i > y^*_{i+1}$. If not $A$, the contribution to the norm is $|\hatalpha_{2(i+1)} - \hatalpha_{2i}|$ since the interconnecting segment has length zero and $\hatalpha_{2i+1} = \hatalpha_{2i}$. In this case, the contribution to the norm is
\begin{align*}
    |\hatalpha_{2(i+1)} - \hatalpha_{2i}| &\leq |\hatalpha_{2(i+1)}| + |\hatalpha_{2i}|\\
                                          &\leq \frac{|y^u_{i+1} - y^l_{i+1}|}{|x^u-x^l|} + \frac{|y^u_{i} - y^l_{i}|}{|x^u-x^l|} \leq 2\frac{U}{\Delta_x}.
\end{align*}

If $A$, we have $|\hatalpha_{2i+1} - \hatalpha_{2i}| \leq 2\epsilon / (\Delta_x/2)$ from the special case. Thus the contribution to the norm is
\begin{align*}
    |\hatalpha_{2i+1} - \hatalpha_{2i}| + |\hatalpha_{2(i+1)} - \hatalpha_{2i+1}| &\leq \frac{4\epsilon}{\Delta_x} + |\hatalpha_{2(i+1)}| + |\hatalpha_{2i+1}|\\
                                                                                  &\leq \frac{4\epsilon}{\Delta_x} + \frac{|y^u_{i+1} - y^l_{i+1}|}{|x^u-x^l|} + \frac{\epsilon}{|x^u-x^l|/2}\\
                                                                                  &\leq \frac{U + 6\epsilon}{\Delta_x} \leq 2\frac{U}{\Delta_x}
\end{align*}
for small enough $\epsilon$.

For the second term in the norm, we bound
\begin{align}
    |\hatalpha_0 + \hatalpha_{N-2}| \leq |\hatalpha_0| + |\hatalpha_{N-2}| \leq \frac{2U}{\Delta_x}
\end{align}
for small enough $\epsilon$.
Putting cases together and using $\|\fthetabasemin\| \leq \|\fhat\|$ gives the result.
\end{proof}

\subsection{Univariate result}

\begin{lemma}
    \label{thm:univariate-app}
    Let $\fstar$ be a piecewise constant function defined on $\sX = \uplus_{i=1}^n [x^l_i, x^u_i]$ with values in a discrete set $\sV \subset \R$, and let $\delta>0$ be the gap between any two intervals in $\sX$.
    Let $\fstd$ be a 2-layer bounded-norm ReLU network that implements $\fstar$ and $\fthetabasemin = \min_{f\in\sH}\{\|f\|: \Pi\circ f(x) = \fstar(x),~x\in \sX\}$.
Then 
\begin{align}
    \frac{\|\fstd\|}{\|\fthetabasemin\|} = \Omega\left(\frac{NL}{U(|J| +\delta\frac{|I|}{\Delta_x})}\right).
\end{align}
\end{lemma}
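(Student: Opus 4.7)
The plan is to combine the two preceding lemmas (the lower bound on $\|\fstd\|$ and the upper bound on $\|\fthetabasemin\|$) and then simplify the resulting ratio to match the stated asymptotic form. Since both bounds are already established via explicit linear-spline constructions, the work remaining is essentially bookkeeping on the separations $L, U$, the counts $|I|, |J|$, the gap $\delta$, and the interval length $\Delta_x$.

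First, I would invoke Lemma \ref{thm:normlowerbound} to write $\|\fstd\| \geq \sum_{i=0}^{N-2} |y^*_{i+1} - y^*_i|/\delta$. Since by assumption every adjacent pair of valid values satisfies $|y^*_{i+1} - y^*_i| \geq L$ and there are $N-1$ such pairs, this immediately gives the lower bound
\begin{equation*}
\|\fstd\| \;\geq\; (N-1)\,L/\delta.
\end{equation*}
Next, I would invoke Lemma \ref{thm:upper-bound} to get $\|\fthetabasemin\| \leq \max\!\bigl(|J|U/\delta + |I|U/\Delta_x,\; U/\Delta_x\bigr)$. To combine the two bounds cleanly, I would use the elementary inequality $\max(A,B)\leq A+B$ to fold the boundary term into the main one, yielding
\begin{equation*}
\|\fthetabasemin\| \;\leq\; |J|U/\delta + (|I|+1)U/\Delta_x.
\end{equation*}

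Then I would form the ratio and factor $U/\delta$ out of the denominator:
\begin{equation*}
\frac{\|\fstd\|}{\|\fthetabasemin\|} \;\geq\; \frac{(N-1)L/\delta}{|J|U/\delta + (|I|+1)U/\Delta_x} \;=\; \frac{(N-1)L}{U\bigl(|J| + \delta(|I|+1)/\Delta_x\bigr)}.
\end{equation*}
For $N\geq 2$, we have $N-1 = \Omega(N)$ and $|I|+1 = O(\max(|I|,1))$, which absorb into the $\Omega(\cdot)$ and produce the claimed bound $\Omega\!\left(\frac{NL}{U(|J| + \delta|I|/\Delta_x)}\right)$.

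The main potential obstacle is the handling of the $\max$ in the upper bound and the edge case $|I|=0$ (in which the $+1$ matters); I expect the crude $\max(A,B)\le A+B$ step to absorb this harmlessly into the $\Omega$ constant, but if a tighter constant were needed one would split into cases depending on which term of the $\max$ is active. Otherwise the argument is a one-line chain of inequalities built on top of Lemmas \ref{thm:normlowerbound} and \ref{thm:upper-bound}.
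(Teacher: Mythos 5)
Your proposal is correct and follows essentially the same route as the paper: invoke Lemma~\ref{thm:normlowerbound} for the lower bound $\|\fstd\|\geq (N-1)L/\delta$, invoke Lemma~\ref{thm:upper-bound} for the upper bound on $\|\fthetabasemin\|$ (the paper dismisses the second term of the max as a constant, which your $\max(A,B)\leq A+B$ step handles slightly more explicitly), and take the ratio. The bookkeeping with $N-1=\Omega(N)$ and the $|I|+1$ term is harmless and matches the paper's asymptotic statement.
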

\begin{proof}
    Using Lemma~\eqref{thm:normlowerbound}, we have $\|\fstd\|\geq \frac{NL}{\delta}$. Taking the (inverse) ratio with the upper bound in Lemma~\eqref{thm:upper-bound} (considering the second term of the maximum as a constant) implies that the (inverse) ratio between norms is
\begin{align*}
    \frac{\|\fthetabasemin\|}{\|\fstd\|} &< \frac{U}{NL}\left(|J| + \delta\frac{|I|}{\Delta_x}\right).
\end{align*}
Taking the inverse gives the result.
\end{proof}

\subsection{Multivariate outputs}

We extend Lemma~\ref{thm:univariate-app} to the multivariate output case, considering functions from $\R \rightarrow \R^k$.
Here, the denoiser is defined as $\Pi(y) \in \argmin_{v\in\sV} \|v - y\|_2$.
Similar to the univariate case, we will construct a lower bound on the complexity of $\fstd$ and upper bound on $\fthetabasemin$.
For the lower bound, extending to the multivariate case is straightforward: to fit the multivariate output, the model must also fit each individual output coordinate, and thus the maximum over the individual multivariate lower bounds will hold.
For the upper bound, we consider a 2-layer ReLU neural network family where the $j$-th output of $\ftheta(x)$ is $\fthetai(x)$, where each $\fthetai$ is a univariate 2-layer ReLU network, and measure its complexity. We upper bound the norm of $\fthetabasemin$ by the norm of the constructed $\ftheta$.
In our special case, we show the coordinate-wise and multivariate projections are the same, allowing us to leverage the univariate bound.

For each univariate network, the first layer weights are shared but each output $j$ has a different second layer weight.
We denote the second layer weights for the $j$-th network as $w^{(2, j)}\in \R^h$.
The norm of the 2-layer ReLU network $\ftheta$ with $k$-dimensional outputs is defined as
\begin{align}
    C(\theta) = \frac{1}{2}(\|W^{(1)}\|^2_F + \sum_{j=1}^k\|w^{(2, j)}\|^2_2).
\end{align}
Again, the norm associated with $f\in\sH$ is the minimum norm required to implement a given $f$:
\begin{align}
    \|f\| = \inf_{\theta \in \Theta} C(\theta) \text{~s.t.~} f_{\theta} = f.
\end{align}

Here, we define bounds for terms analogous to those in the univariate case.
We let $J_j$ and $I_j$ be the index set of non-adjacent and adjacent pairs of intervals respectively in the $j$-th output coordinate.
Let $y^*_{i, j}$ be the $j$-th output coordinate of the $i$-th valid point.
For the $j$-th output coordinate, let $L_j = \min_i |y^*_{i,j}-y^*_{i+1,j}|$ and $U_j = \max_i |y^*_{i,j} - y^*_{i+1, j}|$ be the min and max separation between valid points.
Let $\Delta_x$ be the length of the smallest interval in $\sX$.
Given these definitions, we show a similar gap in the multivariate output case.
\setcounter{theorem}{0}
\begin{theorem}
\label{thm:multivariate-discrete}
Let the valid output space $\sV$ be a set of $N$ discrete points $y^*_1,\dots,y^*_N$ in $\sV = \R^k$.
Let $\fstar$ be a piecewise constant function defined on $\sX = \uplus_{i=1}^n [x^l_i, x^u_i]$ with values in the discrete set $\sV$, and let $\delta>0$ be the gap between any two intervals in $\sX$.
Let $\fstd:\R\rightarrow \R^k$ be a multivariate bounded-norm 2-layer ReLU network in $\sH$ that implements $\fstar$ and $\fthetabasemin = \min_{f\in\sH}\{\|f\|: \Pi\circ f(x) = \fstar(x),~x\in \sX\}$.
Then 
\begin{align}
    \frac{\|\fstd\|}{\|\fthetabasemin\|} = \Omega\left( \frac{N\max_jL_j}{\sum_{j=1}^{k}U_j\left(|J_j|+\delta\frac{|I_j|}{\Delta_x}\right)}\right).
\end{align}
\end{theorem}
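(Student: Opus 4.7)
The plan is to reduce the multivariate statement to the univariate Lemma~\ref{thm:univariate-app} by analyzing the 2-layer ReLU network coordinate by coordinate, then combining a coordinate-wise lower bound on $\|\fstd\|$ with a stacked upper-bound construction for $\|\fthetabasemin\|$. The shared-first-layer, per-coordinate-second-layer architecture cooperates with both sides: the $j$-th coordinate $\fstdi$ of any multivariate network is itself a univariate 2-layer ReLU network realized by $W^{(1)}$ and $w^{(2,j)}$, so $\|\fstdi\|_{\text{uni}} \le \tfrac{1}{2}(\|W^{(1)}\|_F^2 + \|w^{(2,j)}\|_2^2) \le C(\theta)$ for any $\theta$ implementing $\fstd$. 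Taking the infimum, then the maximum over $j$, and applying Lemma~\ref{thm:normlowerbound} to each coordinate gives $\|\fstd\| \ge (N-1)\max_j L_j/\delta$.

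For the upper bound, I would stack the $k$ univariate constructions from Lemma~\ref{thm:upper-bound} as parallel branches with block-diagonal $W^{(1)}$ and second-layer vectors $w^{(2,j)}$ supported only on the $j$-th block. Block-diagonality immediately yields $C(\theta)=\sum_{j=1}^{k} \|f^{(j)}_{\text{base}}\|_{\text{uni}}$, where $f^{(j)}_{\text{base}}$ is the $j$-th univariate construction, and Lemma~\ref{thm:upper-bound} bounds each term by $O\!\bigl(U_j(|J_j|/\delta + |I_j|/\Delta_x)\bigr)$. Summing these and dividing the lower bound by the resulting upper bound produces the ratio advertised in the theorem statement.

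The main obstacle is verifying that this coordinate-wise construction composes \emph{correctly} with the joint $L_2$ denoiser $\Pi(y)\in\argmin_{v\in\sV}\|v-y\|_2$, because componentwise projection onto the marginal sets $\sV_j=\{y^*_{i,j}\}_i$ is in general weaker than $L_2$ projection onto $\sV\subset \R^k$. I would discharge this pointwise: for $x\in[x^l_i,x^u_i]$, each coordinate $\hat y_j$ of the construction sits strictly inside the univariate strip around $y^*_{i,j}$, i.e.\ strictly on the $y^*_{i,j}$-side of every midpoint in $\sV_j$. Hence for any competitor $y^*_{i'}$ and every $j$, either $y^*_{i',j}=y^*_{i,j}$ (the per-coordinate squared distances agree) or $y^*_{i',j}$ lies outside the strip and $(\hat y_j-y^*_{i,j})^2 < (\hat y_j-y^*_{i',j})^2$. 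Summing in $j$ and using that distinct valid points differ in at least one coordinate yields $\|\hat y-y^*_i\|_2 < \|\hat y-y^*_{i'}\|_2$, so $\Pi(\hat y)=y^*_i$ and the stacked construction is feasible for the composed predictor, legitimizing the upper bound and completing the proof.
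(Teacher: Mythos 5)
Your proposal matches the paper's proof essentially step for step: the same coordinate-wise lower bound $\|\fstd\|\geq\max_j\|\fstdi\|$, the same block-concatenated construction whose norm is the sum of the univariate norms, and the same key verification that the coordinate-wise and joint $L_2$ projections agree on the constructed outputs. Your pointwise strict-inequality argument for that last step is, if anything, slightly more careful than the paper's, which handles boundary cases by assuming ties in the two projections are broken consistently.
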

\begin{proof}
For the lower bound on $\|\fstd\|$, note that $\fstd$ must fit every coordinate of the output (a univariate regression problem). Thus, we can lower bound by the norms on any output coordinate, $\|\fstd\| \geq \|\fstdi \|$ for any $j$, where $\fstdi$ is the univariate 2-layer ReLU network for the $j$-th output coordinate. In particular, we can bound by the maximum of the norms, $\|\fstd \| \geq \max_j \|\fstdi \|$.

For the upper bound on $\| \fthetabasemin \|$, we construct a multivariate network $\ftheta$. 
For the $j$-th output coordinate $\fthetai$ of the construction, we ensure that $\Pi_j \circ \fthetai = \fstar_j$, where the projection for the $j$-th coordinate is $\Pi_j(y) = (\arg\min_{v\in \mathcal{V}}|v_j - y|)_j$. While generally the coordinate-wise and multivariate projections are not equal, they are equal here since the coordinate-wise projection is in $\mathcal{V}$: by definition, $\fstar_j(x)=(\arg\min_{v\in \mathcal{V}}|v_j - \fthetai(x)|)_j$ for all $j$, and thus for any valid $v \in \mathcal{V}$, $\sum_j(f^*_j(x) - \fthetai(x))^2 \leq \sum_j (v_j - \fthetai(x))^2$ so that the coordinate-wise projection and the multivariate projections are the same.
We assume for convenience that ties in the multivariate projection and the coordinate-wise projection are broken in the same way in these cases.
Thus, for the upper bound we focus on constructing $\ftheta$ where the coordinate-wise projection represents $\fstar$.

We take $f_{\alpha_j}$ to be an independent univariate 2-layer ReLU network that fits the $j$-th coordinate of $\fstar$ and has the same norm the corresponding univariate construction from Lemma~\ref{thm:univariate-app} that fits the $j$-th output coordinate.
Each $\alpha_j$ consists of $h_j$ hidden units with first layer weights $W^{(1)}_{\alpha_j}, b^{(1)}_{\alpha_j}, w^{(2)}_{\alpha_j}, b^{(2)}_{\alpha_j} \in \R^{h_j}$.
We construct $\fthetai$, the network that computes the $j$-th output of our construction $\ftheta$, by concatenating the first layer weights of each $f_{\alpha_j}$ to define the shared first layer weights
\begin{align*}
    W^{(1)} = [W^{(1)}_{\alpha_1}; \dots;W^{(1)}_{\alpha_k}] \in \R^h\\
    b^{(1)} = [b^{(1)}_{\alpha_1}; \dots; b^{(1)}_{\alpha_k}] \in \R^h
\end{align*}
where $h = \sum_{j=1}^k h_j$.
The second layer weights extend the corresponding second layer weights $w^{(2)}_{\alpha_j}$ with zeros for the newly introduced hidden units:
\begin{align*}
    w^{(2, j)} = [\mathbf{0}^{h_{j-}}; w^{(2)}_{\alpha_j}; \mathbf{0}^{h_{j+}}]\\
    b^{(2, j)} = [\mathbf{0}^{h_{j-}}; b^{(2)}_{\alpha_j}; \mathbf{0}^{h_{j+}}]
\end{align*}
where $h_{j-} = \sum_{r=1}^{j-1} h_r$ and $h_{j+} = \sum_{r=j+1}^k h_r$.
We define the $j$-th output of our construction $\ftheta(x)$ to be $\fthetai(x)$.
The norm of this concatenated network is $\|\ftheta \| = \sum_{j=1}^k \|\fthetai\|$. We bound $\|\fthetabasemin\| \leq \|\ftheta \|$.

Then using Lemma~\ref{thm:normlowerbound} and Lemma~\ref{thm:upper-bound} on each output coordinate, we have $\| \fstd \| \geq \frac{NL_j}{\delta}$ and $\|\fthetabasemin\| \leq \|\ftheta\| \leq \sum_{j=1}^k (\frac{|J_j|U_j}{\delta} + \frac{|I_j|U_j}{\Delta_x})$.
Taking the ratio gives the result.
\end{proof}

\section{Font image generation}\label{app:fonts}
We use a dataset of 56k fonts scraped from the internet~\cite{bernhardsson2016fonts}.
Out of the 6200 labeled examples (62 characters $\times$ 100 fonts), we split randomly into 2500 training examples, 100 validation examples, and 3600 test examples.
The training set contains 25 examples on average from each font, and the model must generate new characters from those fonts at test time.
We have additional unlabeled images for $\sim$50k other fonts to train the denoiser.
After learning the denoiser, we train the base model composed with the denoiser and minimize squared error. We set $\lambda=0$ in \eqref{eqn:composed-objective}, using only the composed component of the objective.
We tune the L2 regularization strength out of \{0, 0.1, 1e-2, 1e-3, 1e-4, 1e-5, 1e-6\} according to squared error on the validation set.
The denoiser is trained for two epochs on unlabeled data and all other models (baseline and composed) are trained for 1000 epochs on the labeled data, using the validation set for early stopping.

The baseline and base predictors use 7-layer feedforward networks where the one-hot vector for character identity is first fed through an embedding layer with embedding dimension 62, and the one-hot vector for the font uses an embedding layer of embedding dimension 100. These embeddings are then concatenated and fed through a 7-layer feedforward network.

Our models were trained on 1 Titan Xp or Titan RTX GPU. All image generation experiments took around 3 hours to run for each of the pre-training and fine-tuning steps.

\begin{figure}
\centering
\begin{tabular} {l r r }
\toprule
   & Test MSE\\
\midrule
    Baseline & 0.193 \\
    Composed (Emboss) & \textbf{0.187}\\
    Composed (Contrast) & \textbf{0.172}\\
    Composed (Gaussian blur) & \textbf{0.171}\\
\bottomrule
\end{tabular}
  \caption{Test MSE on font image generation for a variety of noising functions.}\label{fig:fonts-noising-fns}
\end{figure}

\begin{figure}
    \centering
\includegraphics[width=0.4\textwidth]{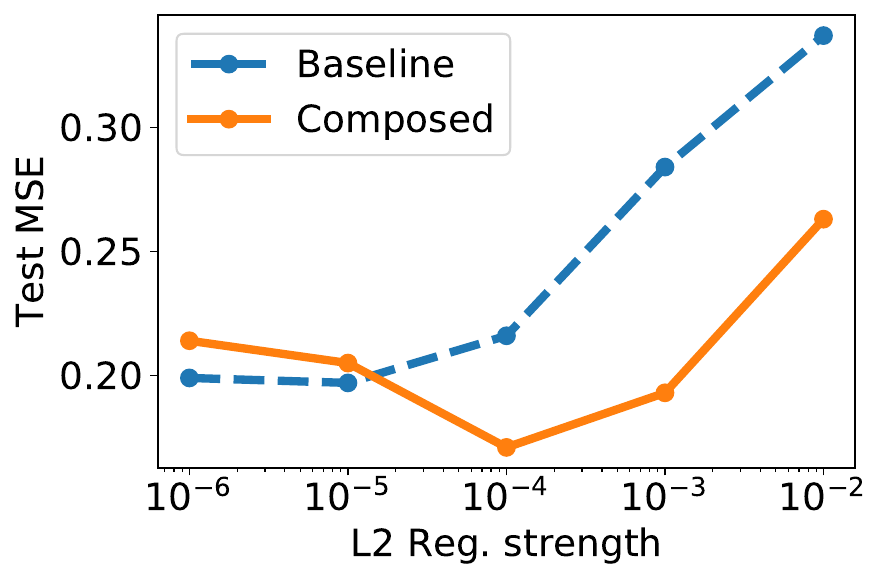}
  \caption{
    Varying L2 regularization strength (1e-6 to 1e-2) for direct and composed predictors. Increasing the regularization strength improves the composed predictor while hurting the direct predictor.
}\label{fig:fonts-numbers}
\vspace{-11px}
\end{figure}

\paragraph{Robustness to choice of denoising perturbations.} In Table~\ref{fig:fonts-noising-fns}, we test the robustness of composing with a denoiser on different choices of noising functions. In addition to Gaussian blur, we change the contrast of the image by a factor of 0.5 or emboss the image.
All noising functions improve beyond the baseline predictor.

\paragraph{Performance with varying labeled and unlabeled data size.} We trained the denoiser with 100 and 5000 additional unlabeled fonts (originally 50k unlabeled fonts). The Composed test MSE was 0.182 and 0.175 for 100 and 5000 unlabeled fonts respectively, well below the Baseline test MSE (0.193) and approaching the result with 50k unlabeled fonts (0.171). Varying labeled data size in font generation (500, 1500, 2500, 4500 examples), the Composed model has lower test MSE than Baseline on all sample sizes, with an average relative MSE decrease of 11\%.
We visualize these results in Figure~\ref{fig:fonts-labeled-unlabeled}.

\begin{figure}[tbp]
  \centering
  \subfloat[Vary labeled data]{\includegraphics[width=0.4\textwidth]{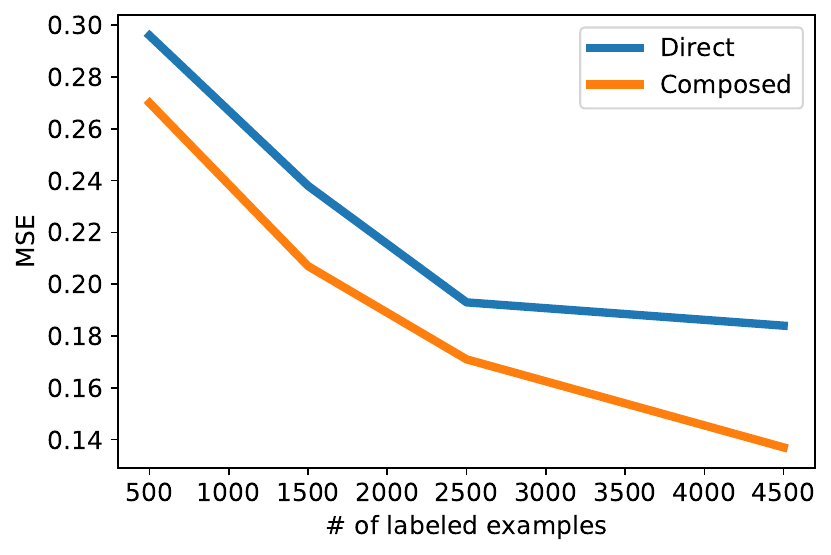}}
  \hfill
  \subfloat[Vary unlabeled data]{\includegraphics[width=0.4\textwidth]{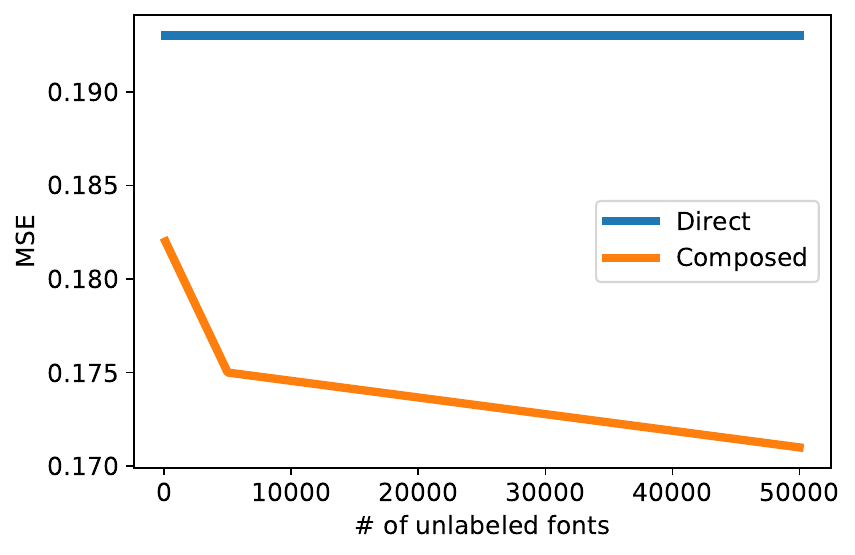}}
  \caption{Performance with varying labeled and unlabeled data (while fixing the other to the default in the original experiment) in font generation. The gap between the direct and composed models increases with increasing labeled and unlabeled examples. }\label{fig:fonts-labeled-unlabeled}
\end{figure}

\section{\sanstype~pseudocode-to-code dataset}
\label{app:synthetic}
Each program in the dataset is generated in two phases. Each line in the program is templated and have pseudocode and noisy code templates to accompany it. When there are multiple options for any of code, pseudocode, or noisy code, a uniformly random option is sampled.

In the first phase, 1 to 4 variables with random types (\texttt{bool, string, int}) are instantiated.
They may be assigned a random value or take on a value from \texttt{stdin}.
Integers take random values in 0 to 100 and there are 10 possible strings in the form \texttt{str\_i} for $i\in \{0,\dots,9\}$.
There are 10 possible variable names in the form \texttt{var\_i} for $i\in \{0,\dots,9\}$.

In the second phase, up to 5 additional code lines are generated. Twenty percent of the time, a new variable with a random type is instantiated and assigned a value. The other 80\% consists of type-preserving operations on an existing variable or variables, such as setting a value, prepending/concatenating, adding/subtracting an integer, taking the logical AND, and conditional swap based on the value of a boolean variable.

The dataset presents the following challenges for the predictor, both stemming from ambiguity in the pseudocode:
\begin{enumerate}
    \item Type inference: the words `set' and `add' are used almost everywhere for instantiation, initialization, and type-preserving operations.
    \item Initialization: A particular example of ambiguity is that the pseudocode for value updates and initialization are often the same (\texttt{set <var> to <value>}), thus requiring the predictor to look in the pseudocode context to figure out whether to insert a type and what the correct type to insert should be.
    \item Scopes and variable shadowing: Initialization can be particularly tricky during a conditional swap, where a scope is introduced with the if block. Inside the if block, one may reinitialize a variable that already exists (shadowing) or initialize a new variable which does not already exist. If the variable is not shadowed, it may change the value of the variable in the outer scope, causing a change in the program output. A locally-scoped variable cannot be used outside the if-block, so they may have to be reinitialized. In this dataset, variables are always newly initialized in the if-block.
\end{enumerate}

\paragraph{Perturbations for denoising.}
We generate perturbations for denoising by inducing random semantic and syntactic changes:
\begin{enumerate}
    \item Replacing, deleting, or inserting a random type in instantiation, initialization, and variable modification statements.
    \item Print and input statements: removing double arrows (\texttt{<<}), reversing arrow directions, removing \texttt{cout}.
\end{enumerate}

\paragraph{Experimental details.}
We use the sentencepiece BPE tokenizer~\cite{kudo2018sentencepiece,sennrich2016bpe} with a joined dictionary size of 600. There are 1000 labeled pairs of pseudocode to code and 20000 unlabeled code examples. Separate validation and test sets of 500 examples are independently generated.
The baseline model, standard fine-tuning model, base predictor, and denoiser use a Transformer architecture with encoder/decoder using 3 (or 6) layers and 2 attention heads, embedding dimension 256, and FFN embedding dimension 1024.
We use dropout probability 0.1, attention dropout probability 0.2, ReLU dropout probability 0.2, weight decay 0.0001. We use the cross entropy loss.
We train the baseline and backtranslation models (models trained frome scratch) for 500 epochs and pre-train the denoiser for 50 epochs. We do standard fine-tuning and composed fine-tuning for 300 epochs.
For all predictors, we use the validation set to do early stopping.
For composed fine-tuning, we use the cross entropy loss of the composition $\Pi \circ \fthetabase$ on the validation set for early stopping.

\section{Training details}\label{app:objective}
If we exactly compute the log-likelihood of the denoised output instead of the first term of the composed objective (Equation~\eqref{eqn:composed-objective}), this would require computing $\int_{y'} p_\beta(y \mid y')p_\theta(y' \mid x)$, marginalizing over outputs $y'$.
Since the output space is large, this is intractable.
Instead, we optimize a lower bound on the log-likelihood of the composed predictor.
To see this, note that the log-likelihood of the composed predictor is lower bounded by the first term of the objective via Jensen's inequality:
\begin{align}
    \E_{x,y}[ \log~\E_{y’ \sim p_\theta(\cdot \mid x)}[p_\beta (y \mid y’)]] \geq \E_{x,y}[ \E_{y’ \sim p_\theta(\cdot \mid x)}[\log p_\beta (y \mid y’)]].
\end{align}

To optimize Equation~\eqref{eqn:composed-objective} where the output space is discrete, we use the REINFORCE estimate of the gradient for the first term~\cite{williams1992simple}. We do not use the Gumbel-softmax reparameterization here since the Transformer model autoregressively conditions on its discrete outputs, but different choices of the base model can enable use of reparameterized gradients.

\label{app:reinforce}
The composed predictor optimizes
\begin{align}
    \argmax_\theta \E[\E_{y' \sim p_\theta}[\log p_\beta(y \mid y') ]] + \lambda \E[\log p_\theta(y \mid x)]
\end{align}
where the outer expectations are over the data, where the samples are $(x, y)$ pairs.
Here, $\theta$ are the parameters of the learned model and $\beta$ are the parameters of the denoiser.
The score function estimate of the gradient is
\begin{align}
    \nabla_\theta \E[\E_{y' \sim p_\theta}[ \log p_\beta(y\mid y') \nabla_\theta \log p_\theta(y' \mid x)]] + \lambda \E[\nabla_\theta\log p_\theta(y \mid x)].
\end{align}

For examples that the model predicts wrongly, the model is encouraged to put smaller probabilities on these examples. This may cause the first term of the gradient to have a very large magnitude, causing numerical instability. We resolve this by clamping $\log p_\beta(y\mid y')$ to $\max(\log p_\beta(y \mid y'), \gamma)$ where $\gamma=-50$ in all experiments. Even after clamping, the composed loss is roughly an order of magnitude larger than the standard loss; to normalize the losses, we scale the composed loss by 0.1.
In all pseudocode-to-code experiments, we use $\lambda=1$ to balance between the fitting the objective directly and using the composed objective.

Our models were trained on 1 Titan Xp or Titan RTX GPU. All \sanstype~experiments took around 3 hours to run, which includes pre-training and fine-tuning steps.

\paragraph{BART-style fine-tuning.}
We implement a two-stage fine-tuning method similar to BART~\cite{lewis2020bart} as follows. First, we fix the Transformer decoder and train only the encoder for 200 epochs. Then, we fine-tune the encoder-decoder together for 100 more epochs. We note that the BART paper adds extra layers to the encoder for fine-tuning, while we fine-tune the existing encoder. The first step has similarities to our method, and our theory may give some justification for this step as well.

\section{\spoc}
\label{app:spoc}
\paragraph{Denoising objective.}
We use 284477 unlabeled code examples from codeforce.com to generate 1137908 pairs of noisy code to valid code. For each unlabeled program, we generate 1 unnoised example and 3 noised examples, where each noised example has one line with an error. We follow~\cite{yasunaga2020repair} to generate error lines by random semantic and syntactic changes, including insertion, deletion, and replacement of keywords, variables, and syntactical symbols.

\paragraph{Pre-trained model.}
We use random syntactic and semantic corruptions of additional $\sim$280k unlabeled code examples from \texttt{codeforces.com} as in~\citet{yasunaga2020repair}.
Previous program repair works~\cite{yasunaga2020repair} utilize compiler error messages to guide the repair model.
We only use code as input, and thus the task is relatively difficult.
We define $p_\Pi$ in two parts. First, we train a binary classifier $g_\gamma:\sY \rightarrow \{0,1\}$ which detects if a program has an error (error is label 1), trained using the denoising dataset. For an output $y'$, if $g_\gamma(y')=0$ then we define $p_\Pi(y\mid y')=\delta(y')$ to be a delta distribution on $y'$.
Otherwise, if $g_\gamma(y')=1$, then $p_\Pi(y\mid y') = p_\nu(y\mid y')$, where $p_\nu$ is a Transformer. The Transformer $p_\nu$ is first pretrained using a linewise code repair dataset generated from unlabeled examples, then trained on full-program repair where the input program has one random corrupted line with probability 0.75.
Thus, taking the parameters of $\Pi$ to be $(\gamma, \nu)$, we have $\Pi(y')=y'$ if $g_\gamma(y')=0$ and $\Pi(y') = \argmax_y p_\nu(y \mid y')$ otherwise.

\paragraph{Data processing.} Since the \spoc~dataset contains a small fraction of programs which have a large number of tokens, we filter out the longest examples from the training data. After filtering, we retain over 95\% of the original training set.
Similarly to \sanstype, special symbols (\$ and $\sim$) are added to delineate lines and tabs in the pseudocode and code, and we preprocess the code using a byte-pair encoding using SentencePiece~\cite{kudo2018sentencepiece}, with joined vocabulary size 10000.

\paragraph{Data filtering.} We train on a truncated version of the \spoc~dataset~\cite{kulal2019spoc}.
We filter out an example during preprocessing if, after adding special tokens for tab and code lines, the number of characters exceeds 1000. This retains 11355 examples out of the full 11925 training examples. We use the given validation splits.
The test sets are unfiltered.

\paragraph{Training details.} For \spoc~experiments, we use a Transformer architecture for all models with 5 layers, 8 attention heads, embedding dimension 256, and FFN embedding dimension 1024. We use this architecture for both the denoiser and the models.
We use dropout probability 0.4, attention dropout probability 0.2, ReLU dropout probability 0.2, weight decay 0.0001, taken as reasonable defaults from~\citet{guzman2019flores}.
We use a decaying label smoothing schedule with smoothing parameter starting with 0.2 for 100 epochs, then 0.1 and 0.05 for 25 epochs each.
We found that reducing the label smoothing parameter near the end of training improves generalization for all models.
The composed predictor initializes from standard fine-tuned model and is trained for 20 epochs, taking the best model according to (label-smoothed) cross entropy loss of the composition $\Pi \circ \fthetabase$ on the validation set.
For the denoiser, we also use a decaying label-smoothing schedule with smoothing parameter starting at 0.2 for 5 epochs, then decaying to 0.1 for 1 epoch, and finally 0.0 for 3 epochs.
For backtranslation models, we first train a code-to-pseudocode model using the labeled data and use this model to produce synthetic pseudocode examples for unlabeled code. Then, we train a pseudocode-to-code model using the labeled examples and synthetically generated examples. Finally, we use this model as initialization to finetune on the labeled data only.

Our models were trained on 1 Titan Xp or Titan RTX GPU. All \spoc~experiments took around 10 hours for the pre-training and fine-tuning steps.

\end{document}